\let\classAND\AND
\let\AND\relax
\let\AND\classAND
\newcommand{\ie}{i.e.\@\xspace}
\newcommand{\eg}{e.g.\@\xspace}
\newcommand{\cf}{cf.\@\xspace} 
\newcommand{\inv}[1]{\ensuremath{#1^{-1}}}
\newcommand{\transpose}[1]{\ensuremath{#1^{\top}}}
\newcommand{\mat}[1]{\ensuremath{\boldsymbol{\mathrm{#1}}}}
\newcommand{\abs}[1]{\ensuremath{\left|#1\right|}}
\newcommand{\expectation}[1]{\ensuremath{\mathbb{E}_{#1}}}
\newcommand{\rr}[1]{\ensuremath{\mathbb{R}^{#1}}}
\newcommand{\parenthesis}[1]{\ensuremath{\left(#1\right)}}
\newcommand{\brackets}[1]{\ensuremath{\left[#1\right]}}
\newcommand{\braces}[1]{\ensuremath{\left\{#1\right\}}}
\newglossaryentry{aux}{
    name        = \ensuremath{\mathrm{\boldsymbol{u}}} ,
    description = {auxiliary variable} ,
    type        = abbrev,
}
\newglossaryentry{im}{
    name        = \ensuremath{\mathrm{Im}} ,
    description = {image space} ,
    type        = abbrev,
}
\newglossaryentry{ker}{
    name        = \ensuremath{\mathrm{Ker}} ,
    description = {kernel space} ,
    type        = abbrev,
}
\newglossaryentry{kronecker}{
    name        = \ensuremath{\otimes} ,
    description = {Kronecker product} ,
    type        = abbrev,
}
\newglossaryentry{loss}{
    name        = \ensuremath{\mathcal{L}} ,
    description = {loss function} ,
    type        = abbrev,
}
\newglossaryentry{numenv}{
    name        = \ensuremath{\abs{E}} ,
    description = {number of environments} ,
    type        = abbrev,
}
\newglossaryentry{lr}{
    name        = \ensuremath{\eta} ,
    description = {learning rate} ,
    type        = abbrev,
}
\newglossaryentry{hypersphere}{
    name        = \ensuremath{\mathcal{S}} ,
    description = {hypersphere} ,
    type        = abbrev,
}
\newcommand{\Sd}[1][d-1]{\ensuremath{\gls{hypersphere}^{#1}}\xspace}
\newglossaryentry{dec}{
    name        = \ensuremath{\boldsymbol{f}} ,
    description = {decoder map $\gls{Latent}\to\gls{Obs}$} ,
    type        = abbrev,
}
\newglossaryentry{deccomp}{
    name        = \ensuremath{f} ,
    description = {decoder map component} ,
    type        = abbrev,
}
\newglossaryentry{enc}{
    name        = \ensuremath{\boldsymbol{g}} ,
    description = {encoder map $\gls{Obs}\to\gls{Latent}$} ,
    type        = abbrev,
}
\newglossaryentry{numdata}{
    name        = \ensuremath{n} ,
    description = {number of samples} ,
    type        = abbrev,
}
\newglossaryentry{observations}{type=abbrev,name=Observations,description={\nopostdesc}}
\newglossaryentry{obs}{
    name        = \ensuremath{\boldsymbol{x}} ,
    description = {observation vector} ,
    type        = abbrev,
    parent      = observations,
}
\newglossaryentry{obscomp}{
    name        = \ensuremath{x} ,
    description = {observation single component} ,
    type        = abbrev,
    parent      = observations,
}
\newglossaryentry{Obs}{
    name        = \ensuremath{\mathcal{X}} ,
    description = {observation space} ,
    type        = abbrev,
    parent      = observations,
}
\newglossaryentry{obsdim}{
    name        = \ensuremath{D} ,
    description = {dimensionality of the observation space \gls{Obs}} ,
    type        = abbrev,
    parent      = observations,
}
\newglossaryentry{obsmat}{
    name        = \ensuremath{\mat{X}} ,
    description = {observation matrix of \rr{\gls{numdata}\times\gls{obsdim}}} ,
    type        = abbrev,
    parent      = observations,
}
\newglossaryentry{obspos}{
    name        = \ensuremath{\tilde{\boldsymbol{x}}} ,
    description = {positive observation vector} ,
    type        = abbrev,
    parent      = observations,
}
\newglossaryentry{obsneg}{
    name        = \ensuremath{{\boldsymbol{x}}^{-}} ,
    description = {negative observation vector} ,
    type        = abbrev,
    parent      = observations,
}
\newglossaryentry{labels}{type=abbrev,name=Labels,description={\nopostdesc}}
\newglossaryentry{label}{
    name        = \ensuremath{\boldsymbol{y}} ,
    description = {label vector} ,
    type        = abbrev,
    parent      = labels,
}
\newglossaryentry{labelhat}{
    name        = \ensuremath{\widehat{\boldsymbol{y}}} ,
    description = {estimated label vector} ,
    type        = abbrev,
    parent      = labels,
}
\newglossaryentry{labelcomp}{
    name        = \ensuremath{y} ,
    description = {label component} ,
    type        = abbrev,
    parent      = labels,
}
\newglossaryentry{labelcomphat}{
    name        = \ensuremath{\widehat{y}} ,
    description = {label component} ,
    type        = abbrev,
    parent      = labels,
}
\newglossaryentry{labelset}{
    name        = \ensuremath{\mathcal{Y}} ,
    description = {label set} ,
    type        = abbrev,
    parent      = labels,
}
\newglossaryentry{labeldim}{
    name        = \ensuremath{C} ,
    description = {number of classes in the label set \gls{labelset}} ,
    type        = abbrev,
    parent      = labels,
}
\newglossaryentry{latents}{type=abbrev,name=Latents,description={\nopostdesc}}
\newglossaryentry{latent}{
    name        = \ensuremath{\boldsymbol{z}} ,
    description = {latent vector} ,
    type        = abbrev,
    parent     = latents,
}
\newglossaryentry{latentcomp}{
    name        = \ensuremath{z} ,
    description = {latent single component} ,
    type        = abbrev,
    parent     = latents,
}
\newglossaryentry{Latent}{
    name        = \ensuremath{\mathcal{Z}} ,
    description = {latents} ,
    type        = abbrev,
    parent     = latents,
}
\newglossaryentry{latentdim}{
    name        = \ensuremath{d} ,
    description = {dimensionality of the latent space \gls{Latent}} ,
    type        = abbrev,
    parent     = latents,
}
\newglossaryentry{latentmat}{
    name        = \ensuremath{\mat{Z}} ,
    description = {latent matrix of \rr{\gls{numdata}\times\gls{latentdim}}} ,
    type        = abbrev,
    parent      = latents,
}
\newglossaryentry{latentpos}{
    name        = \ensuremath{\tilde{\boldsymbol{z}}} ,
    description = {positive latent vector} ,
    type        = abbrev,
    parent      = latents,
}
\newglossaryentry{latentneg}{
    name        = \ensuremath{\boldsymbol{z}^{-}} ,
    description = {negative latent vector} ,
    type        = abbrev,
    parent      = observations,
}
\newglossaryentry{sigmaz}{
    name        = \ensuremath{\boldsymbol{\sigma}_{\gls{latentcomp}}} ,
    description = {std of \gls{latentcomp}} ,
    type        = abbrev,
    parent     = latents,
}
\newglossaryentry{content}{
    name        = \ensuremath{\boldsymbol{z}^{c}} ,
    description = {content latent vector} ,
    type        = abbrev,
    parent     = latents,
}
\newglossaryentry{contentcomp}{
    name        = \ensuremath{z^{c}} ,
    description = {content latent single component} ,
    type        = abbrev,
    parent     = latents,
}
\newglossaryentry{Content}{
    name        = \ensuremath{\mathcal{Z}^{c}} ,
    description = {content} ,
    type        = abbrev,
    parent     = latents,
}
\newglossaryentry{contentdim}{
    name        = \ensuremath{d_{c}} ,
    description = {dimensionality of \gls{content}} ,
    type        = abbrev,
    parent     = latents,
}
\newglossaryentry{sigmac}{
    name        = \ensuremath{\boldsymbol{\sigma}_{c}} ,
    description = {std of \gls{contentcomp}} ,
    type        = abbrev,
    parent     = latents,
}
\newglossaryentry{style}{
    name        = \ensuremath{\boldsymbol{z}^{s}} ,
    description = {style latent vector} ,
    type        = abbrev,
    parent     = latents,
}
\newglossaryentry{stylecomp}{
    name        = \ensuremath{z^{s}} ,
    description = {style latent single component} ,
    type        = abbrev,
    parent     = latents,
}
\newglossaryentry{Style}{
    name        = \ensuremath{\mathcal{Z}^{s}} ,
    description = {style} ,
    type        = abbrev,
    parent     = latents,
}
\newglossaryentry{styledim}{
    name        = \ensuremath{d_{s}} ,
    description = {dimensionality of \gls{style}} ,
    type        = abbrev,
    parent     = latents,
}
\newglossaryentry{sigmas}{
    name        = \ensuremath{\boldsymbol{\sigma}_{s}} ,
    description = {std of \gls{stylecomp}} ,
    type        = abbrev,
    parent     = latents,
}
\newglossaryentry{modality}{
    name        = \ensuremath{\boldsymbol{z}^{m}} ,
    description = {modality-specific latent vector} ,
    type        = abbrev,
    parent     = latents,
}
\newglossaryentry{modalitycomp}{
    name        = \ensuremath{z^{m}} ,
    description = {modality-specific  latent single component} ,
    type        = abbrev,
    parent     = latents,
}
\newglossaryentry{Modality}{
    name        = \ensuremath{\mathcal{Z}^{m}} ,
    description = {latent subspace of \gls{modality}} ,
    type        = abbrev,
    parent     = latents,
}
\newglossaryentry{modalitydim}{
    name        = \ensuremath{d_{m}} ,
    description = {dimensionality of \gls{modality}} ,
    type        = abbrev,
    parent     = latents,
}
\newglossaryentry{algebra}{type=abbrev,name=Algebra,description={\nopostdesc}}
\newglossaryentry{identity}{
    name        = \ensuremath{\boldsymbol{\mathrm{I}}} ,
    description = { identity matrix} ,
    type        = abbrev,
    parent      = algebra,
}
\newglossaryentry{ones}{
    name        = \ensuremath{\boldsymbol{\mathrm{1}}} ,
    description = {a vector of ones} ,
    type        = abbrev,
    parent      = algebra,
}
\newglossaryentry{zeros}{
    name        = \ensuremath{\boldsymbol{\mathrm{0}}} ,
    description = {a vector of zeros} ,
    type        = abbrev,
    parent      = algebra,
}
\newglossaryentry{jacobian}{
    name        = \ensuremath{\boldsymbol{\mathrm{J}}} ,
    description = {Jacobian matrix} ,
    type        = abbrev,
    parent      = algebra,
}
\newglossaryentry{hessian}{
    name        = \ensuremath{\boldsymbol{\mathrm{H}}} ,
    description = {Hessian matrix} ,
    type        = abbrev,
    parent      = algebra,
}
\newglossaryentry{d}{
    name        = \ensuremath{\boldsymbol{\mathrm{D}}} ,
    description = {diagonal matrix} ,
    type        = abbrev,
    parent      = algebra,
}
\newglossaryentry{o}{
    name        = \ensuremath{\boldsymbol{\mathrm{O}}},
    description = {orthogonal matrix} ,
    type        = abbrev,
    parent      = algebra,
}
\newglossaryentry{scalar}{
    name        = \ensuremath{\alpha} ,
    description = {scalar field} ,
    type        = abbrev,
    parent      = algebra,
}
\newglossaryentry{perm}{
    name        = \ensuremath{\mathbb{P}} ,
    description = {group of permutation matrices} ,
    type        = abbrev,
    parent      = algebra,
}
\newglossaryentry{p}{
    name        = \ensuremath{\mat{P}},
    description = {permutation matrix} ,
    type        = abbrev,
    parent      = algebra,
}
\newglossaryentry{prob}{type=abbrev,name=Probability theory,description={\nopostdesc}}
\newglossaryentry{cov}{
    name        = \ensuremath{\boldsymbol{\mathrm{\Sigma}}},
    description = {covariance matrix} ,
    type        = abbrev,
    parent      = prob,
}
\newglossaryentry{mean}{
    name        = \ensuremath{\boldsymbol{\mu}},
    description = {mean} ,
    type        = abbrev,
    parent      = prob,
}
\newglossaryentry{std}{
    name        = \ensuremath{\boldsymbol{\sigma}},
    description = {standard deviation} ,
    type        = abbrev,
    parent      = prob,
}
\newglossaryentry{entropy}{
    name        = \ensuremath{\mathrm{H}} ,
    description = {entropy} ,
    type        = abbrev,
    parent      = prob,
}
\newglossaryentry{expfamparam}{
    name        = \ensuremath{\boldsymbol{\theta}} ,
    description = {parameter of exponential family} ,
    type        = abbrev,
    parent      = prob,
}
\newglossaryentry{expfamnatparam}{
    name        = \ensuremath{\boldsymbol{\eta}} ,
    description = {natural parameter of exponential family} ,
    type        = abbrev,
    parent      = prob,
}
\newglossaryentry{expfamsuffstat}{
    name        = \ensuremath{T(\gls{obs})} ,
    description = {sufficient statistics of exponential family} ,
    type        = abbrev,
    parent      = prob,
}
\newglossaryentry{expfamlogpartition}{
    name        = \ensuremath{A} ,
    description = {log parition function of exponential family (depends on \gls{expfamnatparam})} ,
    type        = abbrev,
    parent      = prob,
}
\newglossaryentry{wishart}{
    name        = \ensuremath{\mathcal{W}} ,
    description = {Wishart distribution} ,
    type        = abbrev,
    parent      = prob,
}
\newglossaryentry{normal}{
    name        = \ensuremath{\mathcal{N}} ,
    description = {normal distribution} ,
    type        = abbrev,
    parent      = prob,
}
\newglossaryentry{matrixnormal}{
    name        = \ensuremath{\mathcal{MN}} ,
    description = {normal distribution} ,
    type        = abbrev,
    parent      = prob,
}
\newglossaryentry{causal}{type=abbrev,name=Causality,description={\nopostdesc}}
\newglossaryentry{cause}{
    name        = \ensuremath{\boldsymbol{N}},
    description = {noise (independent)  variable vector} ,
    type        = abbrev,
    parent      = causal,
}
\newglossaryentry{causecomp}{
    name        = \ensuremath{N},
    description = {noise (independent)  variable component} ,
    type        = abbrev,
    parent      = causal,
}
\newglossaryentry{Cause}{
    name        = \ensuremath{\mathcal{N}} ,
    description = {space of the noise variables} ,
    type        = abbrev,
    parent      = causal,
}
\newglossaryentry{effect}{
    name        = \ensuremath{\boldsymbol{X}},
    description = {observation vector} ,
    type        = abbrev,
    parent      = causal,
}
\newglossaryentry{effectcomp}{
    name        = \ensuremath{X},
    description = {observation component} ,
    type        = abbrev,
    parent      = causal,
}
\newglossaryentry{Effect}{
    name        = \ensuremath{\mathcal{X}} ,
    description = {space of the effect variables} ,
    type        = abbrev,
    parent      = causal,
}
\newglossaryentry{pa}{
    name        = \ensuremath{\boldsymbol{Pa}},
    description = {parents of \gls{effect}} ,
    type        = abbrev,
    parent      = causal,
}
\newglossaryentry{nondesc}{
    name        = \ensuremath{\boldsymbol{ND}},
    description = {non-descendants of \gls{effect}} ,
    type        = abbrev,
    parent      = causal,
}
\newglossaryentry{nondescminuspa}{
    name        = \ensuremath{\boldsymbol{\overline{ND}}},
    description = {non-descendants of \gls{effect}, excluding its parents} ,
    type        = abbrev,
    parent      = causal,
}
\newglossaryentry{semf}{
    name        = \ensuremath{\boldsymbol{f}},
    description = {structural assignment in \glspl{sem}} ,
    type        = abbrev,
    parent      = causal,
}
\newglossaryentry{semfcomp}{
    name        = \ensuremath{f},
    description = {a component of \gls{semf}} ,
    type        = abbrev,
    parent      = causal,
}
\newglossaryentry{order}{
    name        = \ensuremath{\pi},
    description = {causal ordering} ,
    type        = abbrev,
    parent      = causal,
}
\newglossaryentry{indexset}{
    name        = \ensuremath{\mathcal{I}},
    description = {index set} ,
    type        = abbrev,
    parent      = causal,
}
\newglossaryentry{adjacency}{
    name        = \ensuremath{\boldsymbol{\mathcal{A}}} ,
    description = {adjacency matrix of a \glspl{sem}} ,
    type        = abbrev,
    parent      = causal,
}
\newglossaryentry{connectivity}{
    name        = \ensuremath{\boldsymbol{\mathcal{C}}} ,
    description = {connectivity matrix of a \glspl{sem}} ,
    type        = abbrev,
    parent      = causal,
}
\newglossaryentry{dependency}{
    name        = \ensuremath{\mathcal{D}} ,
    description = {dependency matrix of a \glspl{sem}} ,
    type        = abbrev,
    parent      = causal,
}
\newglossaryentry{seq}{
    name        = \ensuremath{\sim_{\acrshort{dag}}} ,
    description = {structural equivalence} ,
    type        = abbrev,
    parent      = causal,
}
\newglossaryentry{contrastive}{type=abbrev,name=Contrastive Learning,description={\nopostdesc}}
\newglossaryentry{clloss}{
    name        = \ensuremath{\mathcal{L}_{\mathrm{\acrshort{cl}}}} ,
    description = {contrastive loss function} ,
    type        = abbrev,
    parent      = contrastive,
}
\newglossaryentry{alignloss}{
    name        = \ensuremath{\mathcal{L}_{\mathrm{align}}} ,
    description = {alignment term in \gls{clloss}} ,
    type        = abbrev,
    parent      = contrastive,
}
\newglossaryentry{uniformloss}{
    name        = \ensuremath{\mathcal{L}_{\mathrm{uniform}}} ,
    description = {uniformity term in \gls{clloss}} ,
    type        = abbrev,
    parent      = contrastive,
}
\newglossaryentry{temp}{
    name        = \ensuremath{{\boldsymbol{\tau}}} ,
    description = {temperature in \gls{clloss}} ,
    type        = abbrev,
    parent      = contrastive,
}
\newglossaryentry{numneg}{
    name        = \ensuremath{M} ,
    description = {number of negative samples} ,
    type        = abbrev,
    parent      = contrastive,
}
\newglossaryentry{vaes}{type=abbrev,name=\acrlongpl{vae},description={\nopostdesc}}
\newglossaryentry{q}{
    name        = \ensuremath{q_{\gls{encpar}}(\gls{latent}|\gls{obs})} ,
    description = {variational posterior of the \acrshort{vae}, mapping $\gls{obs}\mapsto\gls{latent}$ parametrized by \gls{encpar}} ,
    type        = abbrev,
    parent      = vaes,
}
\newglossaryentry{qopt}{
    name        = \ensuremath{q_{\widehat{\gls{encpar}}}(\gls{latent}|\gls{obs})} ,
    description = {optimal variational posterior of the \acrshort{vae}, mapping $\gls{obs}\mapsto\gls{latent}$ parametrized by \gls{encpar}} ,
    type        = abbrev,
    parent      = vaes,
}
\newglossaryentry{encpar}{
    name        = \ensuremath{\boldsymbol{\phi}} ,
    description = {parameters of the variational posterior \gls{q}} ,
    type        = abbrev,
    parent      = vaes,
}
\newglossaryentry{encparopt}{
    name        = \ensuremath{\widehat{\boldsymbol{\phi}}} ,
    description = {optimal parameters of the variational posterior \gls{q}} ,
    type        = abbrev,
    parent      = vaes,
}
\newglossaryentry{var_family}{
    name        = \ensuremath{\mathcal{Q}} ,
    description = {distribution family of the variational posterior \gls{q} } ,
    type        = abbrev,
    parent      = vaes,
}
\newglossaryentry{pz}{
    name        = \ensuremath{p_0(\gls{latent})} ,
    description = {latent prior distribution} ,
    type        = abbrev,
    parent      = vaes,
}
\newglossaryentry{px}{
    name        = \ensuremath{p_{\gls{decpar}}(\gls{obs})} ,
    description = {marginal likelihood } ,
    type        = abbrev,
    parent      = vaes,
}
\newglossaryentry{pdata}{
    name        = \ensuremath{p(\gls{obs})} ,
    description = {data distribution } ,
    type        = abbrev,
    parent      = vaes,
}
\newglossaryentry{mean_enc}{
    name        = \ensuremath{\mu_{\gls{latent}|\gls{obs}}} ,
    description = {mean encoder of the \acrshort{vae}, \ie, $\expectation{\gls{latent}\sim\gls{q}}\parenthesis{\gls{latent}}$, mapping $\gls{obs}\mapsto\gls{latent}$} ,
    type        = abbrev,
    parent      = vaes,
}
\newglossaryentry{var_cov}{
    name        = \ensuremath{\gls{cov}^{\gls{encpar}}_{\gls{latent}|\gls{obs}}} ,
    description = {covariance matrix of \gls{q}} ,
    type        = abbrev,
    parent      = vaes,
}
\newglossaryentry{sigmak}{
    name        = \ensuremath{{\sigma}_{k}^{\gls{encpar}}(\gls{obs})^{2}} ,
    description = {variance of \gls{q} in dimension $k$} ,
    type        = abbrev,
    parent      = vaes,
}
\newglossaryentry{sigmaopt}{
    name        = \ensuremath{\boldsymbol{\sigma}^{\gls{encparopt}}(\gls{obs})^{2}} ,
    description = {optimal variance of \gls{q}} ,
    type        = abbrev,
    parent      = vaes,
}
\newglossaryentry{sigmaoptk}{
    name        = \ensuremath{{\sigma}_{k}^{\gls{encparopt}}(\gls{obs})^{2}} ,
    description = {optimal variance of \gls{q} in dimension $k$} ,
    type        = abbrev,
    parent      = vaes,
}
\newglossaryentry{mu}{
    name        = \ensuremath{\boldsymbol{\mu}^{\gls{encpar}}(\gls{obs})} ,
    description = {mean of \gls{q}} ,
    type        = abbrev,
    parent      = vaes,
}
\newglossaryentry{muk}{
    name        = \ensuremath{{\mu}_{k}^{\gls{encpar}}(\gls{obs})} ,
    description = {mean of \gls{q} in dimension $k$} ,
    type        = abbrev,
    parent      = vaes,
}
\newglossaryentry{muopt}{
    name        = \ensuremath{\boldsymbol{\mu}^{\gls{encparopt}}(\gls{obs})} ,
    description = {optimal mean of \gls{q}} ,
    type        = abbrev,
    parent      = vaes,
}
\newglossaryentry{muoptk}{
    name        = \ensuremath{{\mu}_{k}^{\gls{encparopt}}(\gls{obs})} ,
    description = {optimal mean of \gls{q} in dimension $k$} ,
    type        = abbrev,
    parent      = vaes,
}
\newglossaryentry{gamma}{
    name        = \ensuremath{\gamma} ,
    description = {square root of the precision of the \gls{vae} decoder} ,
    type        = abbrev,
    parent      = vaes,
}
\newglossaryentry{betaloss}{
    name        = \ensuremath{\mathcal{L}_{\beta}} ,
    description = {\betavae loss function} ,
    type        = abbrev,
    parent      = vaes,
}
\newglossaryentry{pxz}{
    name        = \ensuremath{p_{\gls{decpar}}(\gls{obs}|\gls{latent})} ,
    description = {conditional distribution of the decoded samples of the \acrshort{vae}, mapping $\gls{latent}\mapsto\gls{obs}$, parametrized by \gls{decpar}} ,
    type        = abbrev,
    parent      = vaes,
}
\newglossaryentry{pzx}{
    name        = \ensuremath{p_{\gls{decpar}}(\gls{latent}|\gls{obs})} ,
    description = {true posterior distribution of the decoded samples of the \acrshort{vae}, mapping $\gls{obs}\mapsto\gls{latent}$, parametrized by \gls{decpar}} ,
    type        = abbrev,
    parent      = vaes,
}
\newglossaryentry{decpar}{
    name        = \ensuremath{\boldsymbol{\theta}} ,
    description = {parameters of the decoder \gls{pxz}} ,
    type        = abbrev,
    parent      = vaes,
}
\newglossaryentry{invdeccomp}{
    name        = \ensuremath{{g}^{\gls{decpar}}} ,
    description = {inverse decoder component} ,
    type        = abbrev,
    parent      = vaes,
}
\newglossaryentry{invdec}{
    name        = \ensuremath{\mathrm{\boldsymbol{g}}^{\gls{decpar}}} ,
    description = {inverse decoder} ,
    type        = abbrev,
    parent      = vaes,
}
\newglossaryentry{distortion}{
    name        = \ensuremath{D} ,
    description = {Distortion of \cite{alemi_fixing_2018}, the same as the reconstruction term of the \acrshort{elbo} for $\beta=1$} ,
    type        = abbrev,
    parent      = vaes,
}
\newglossaryentry{rate}{
    name        = \ensuremath{R} ,
    description = {Rate of \cite{alemi_fixing_2018}, the same as the \acrshort{kld} term of the \acrshort{elbo} for $\beta=1$} ,
    type        = abbrev,
    parent      = vaes,
}
\newglossaryentry{lindec}{
    name        = \ensuremath{\boldsymbol{\mathrm{W}}} ,
    description = {weight matrix of a linear decoder} ,
    type        = abbrev,
    parent      = vaes,
}
\newglossaryentry{linenc}{
    name        = \ensuremath{\boldsymbol{\mathrm{V}}} ,
    description = {weight matrix of a linear encoder} ,
    type        = abbrev,
    parent      = vaes,
}
\newglossaryentry{imas}{type=abbrev,name=\acrlong{ima},description={\nopostdesc}}
\newglossaryentry{mixing}{
    name        = \ensuremath{\inv{g}} ,
    description = {inverse of the learned unmixing of the \acrshort{ima}, mapping $\gls{latent}\mapsto\gls{obs}$ } ,
    type        = abbrev,
    parent      = imas,
}
\newglossaryentry{lin_mixing}{
    name        = \ensuremath{A} ,
    description = {ground-truth \emph{linear} mixing process of the \acrshort{ima}, mapping $\gls{latent}\mapsto\gls{obs}$ } ,
    type        = abbrev,
    parent      = imas,
}
\newglossaryentry{cima_local}{
    name        = \ensuremath{c_{\acrshort{ima}}} ,
    description = {local \acrshort{ima} contrast } ,
    type        = abbrev,
    parent      = imas,
}
\newglossaryentry{cima_global}{
    name        = \ensuremath{C_{\acrshort{ima}}} ,
    description = {global \acrshort{ima} contrast } ,
    type        = abbrev,
    parent      = imas,
}
\newglossaryentry{source}{
    name        = \ensuremath{s} ,
    description = {sources (\acrshort{ica} equivalent of latents)} ,
    type        = abbrev,
    parent      = imas,
}
\newglossaryentry{rec_s}{
    name        = \ensuremath{\boldsymbol{y}} ,
    description = {reconstructed sources} ,
    type        = abbrev,
    parent      = imas,
}
\newglossaryentry{p_source}{
    name        = \ensuremath{p_{\gls{latent}}} ,
    description = {source distribution} ,
    type        = abbrev,
    parent      = imas,
}
\newglossaryentry{imaloss}{
    name        = \ensuremath{\mathcal{L}_{\gls{ima}}} ,
    description = {\gls{ima} loss function} ,
    type        = abbrev,
    parent      = imas,
}
\NewDocumentCommand{\cima}{ O{\gls{dec}} O{\gls{latent}}  }{\ensuremath{\gls{cima_local} ( #1\!,  #2) }\xspace}
\NewDocumentCommand{\Cima}{ O{\gls{dec}} O{\ensuremath{p_0}  }}{\ensuremath{\gls{cima_global} ( #1,  #2) }\xspace}
\newglossaryentry{gps}{type=abbrev,name=\acrlongpl{gp},description={\nopostdesc}}
\newglossaryentry{gpr}{
    name        = \ensuremath{\mathcal{GP}} ,
    description = {Gaussian Process} ,
    type        = abbrev,
    parent      = gps,
}
\newglossaryentry{gpker}{
    name        = \ensuremath{k} ,
    description = {kernel function} ,
    type        = abbrev,
    parent      = gps,
}
\newglossaryentry{gpcov}{
    name        = \ensuremath{\mathcal{K}} ,
    description = {$\gls{numdata}\times\gls{numdata}$ covariance matrix of a \acrshort{gp}} ,
    type        = abbrev,
    parent      = gps,
}
\newglossaryentry{dim}{
    name        = \ensuremath{d},
    description = {problem dimensionality} ,
    type        = abbrev,
}
\newglossaryentry{r2}{
    name        = \ensuremath{R^2} ,
    description = {coefficient of determination} ,
    type        = abbrev,
}
  \def\gls#1{<#1>}%
  \def\glspl#1{<#1>}%
  \def\acrshort#1{<#1>}%
  \def\acrlong#1{<#1>}%
  \def\acrfull#1{<#1>}%
\definecolor{figblue}{HTML}{4A90E2}
\definecolor{figred}{HTML}{D0021B}
\definecolor{figpurple}{HTML}{7030A0}
\definecolor{figgreen}{HTML}{2CA02C}
\crefname{section}{\S}{\S\S}
\crefname{subsection}{\S}{\S\S}
\crefname{subsubsection}{\S}{\S\S}
\crefname{figure}{Fig.}{Figs.}
\crefname{prop}{Prop.}{Props.}
\crefname{appendix}{Appx.}{Appxs.}
\crefname{algorithm}{Alg.}{Algs.}
\crefname{theorem}{Thm.}{Thms.}
\crefname{definition}{Defn.}{Defns.}
\crefname{cor}{Corollary}{Corollaries}
\crefname{lem}{Lem.}{Lems.}
\crefname{table}{Tab.}{Tabs.}
\crefname{assum}{Assum.}{Assums.}
\crefname{example}{Ex.}{Exs.}
\newtheorem{assum}{Assumption}
\newtheorem{prop}{Proposition}
\newtheorem{lem}{Lemma}
\newtheorem{definition}{Definition}
\newtheorem{example}{Example}
\newtheorem{theorem}{Theorem}
\let\ORGhypersetup\hypersetup
\protected\def\hypersetup{\ORGhypersetup}
  \def\hypersetup#1{}%
  \let\Cref\crtCref
  \let\cref\crtcref
  \def\gls#1{<#1>}%
  \def\glspl#1{<#1>}%
  \def\acrshort#1{<#1>}%
  \def\acrlong#1{<#1>}%
  \def\acrfull#1{<#1>}%
\setlist[enumerate,itemize]{noitemsep, nolistsep}
\newacronym{mpa}{MPA}{Measure Preserving Automorphism}
\newacronym{iid}{i.i.d.}{independent and identically distributed}
\newacronym{vmf}{vMF}{von Mises-Fisher}
\newacronym{nivmf}{nivMF}{non-isotropic von Mises-Fisher}
\newacronym{pd}{PD}{positive definite}
\newacronym{psd}{PSD}{positive semi-definite}
\newacronym{nd}{ND}{negative definite}
\newacronym{nsd}{NSD}{negative semi-definite}
\newacronym{ode}{ODE}{Ordinary Differential Equation}
\newacronym{pde}{PDE}{Partial Differential Equation}
\newacronym{lhs}{LHS}{left hand side}
\newacronym{rhs}{RHS}{right hand side}
\newacronym{rv}{RV}{random variable}
\newacronym{ae}{AE}{AutoEncoder}
\newacronym{lae}{LAE}{Linear Autoencoder}
\newacronym{vae}{VAE}{Variational Autoencoder}
\newacronym{cvvae}{CV-VAE}{Constant-Variance Variational Autoencoder}
\newacronym{ivae}{iVAE}{Identifiable Variational Autoencoder}
\newacronym{rae}{RAE}{Regularized Autoencoder}
\newacronym{grae}{GRAE}{Gaussian Regularized Autoencoder}
\newacronym{lvm}{LVM}{latent variable model}
\newacronym[longplural=Gaussian Processes]{gp}{GP}{Gaussian Process}
\newacronym{gplvm}{GPLVM}{Gaussian Process Latent Variable Model}
\newacronym{rbf}{RBF}{Radial Basis Function}
\newcommand{\betavae}{$\beta$-\gls{vae}\xspace}
\newacronym{kld}{KL}{Kullback-Leibler Divergence}
\newacronym{elbo}{{\text{\upshape ELBO}}}{evidence lower bound}
\newacronym{pca}{PCA}{Principal Component Analysis}
\newacronym{ppca}{PPCA}{Probabilistic Principal Component Analysis}
\newacronym{ebm}{EBM}{Energy-Based Model}
\newacronym{cca}{CCA}{Canonical Correlation Analysis}
\newacronym{mi}{MI}{Mutual Information}
\newacronym[longplural=Identifiable Exchangeable Mechanisms]{iem}{IEM}{Identifiable Exchangeable Mechanisms}
\newacronym[longplural=Independent Causal Mechanisms]{icm}{ICM}{Independent Causal Mechanisms}
\newacronym{sms}{SMS}{Sparse Mechanism Shift}
\newacronym{mss}{MSS}{Mechanism Shift Score}
\newacronym{sem}{SEM}{Structural Equation Model}
\newacronym{lingam}{LiNGAM}{Linear Non-Gaussian Acyclic Model}
\newacronym{dag}{DAG}{Directed Acyclic Graph}
\newacronym{anm}{ANM}{Additive Noise Model}
\newacronym{cd}{CD}{Causal Discovery}
\newacronym{crl}{CRL}{Causal Representation Learning}
\newacronym{hmm}{HMM}{Hidden Markov Model}
\newacronym{plr}{PLR}{Partially Linear Regression}
\newacronym{it}{IT}{identifiability theory}
\newacronym{sith}{SITh}{Singular Identifiability Theory}
\newacronym{lt}{LT}{learning theory}
\newacronym{slt}{SLT}{Singular Learning Theory}
\newacronym{ica}{ICA}{Independent Component Analysis}
\newacronym{nlica}{NLICA}{nonlinear Independent Component Analysis}
\newacronym{bss}{BSS}{Blind Source Separation}
\newacronym{ima}{{\text{\upshape IMA}}}{Independent Mechanism Analysis}
\newacronym{igci}{IGCI}{Information Geometric Causal Inference}
\newacronym{cdf}{CdF}{Causal de Finetti}
\newacronym{nce}{NCE}{Noise Contrastive Estimation}
\newacronym{pcl}{PCL}{Permutation-Contrastive Learning}
\newacronym{tcl}{TCL}{Time-Contrastive Learning}
\newacronym{gencl}{GCL}{Generalized Contrastive Learning}
\newacronym{iia}{IIA}{Independent Innovation Analysis}
\newacronym{ar}{AR}{autoregressive}
\newacronym{var}{VAR}{Vector autoregressive}
\newacronym{nvar}{NVAR}{Nonlinear Vector AutoRegressive}
\newacronym{ai}{AI}{Artificial Intelligence}
\newacronym{ml}{ML}{Machine Learning}
\newacronym{dml}{DML}{Double Machine Learning}
\newacronym{oml}{OML}{Orthogonal Machine Learning}
\newacronym{homl}{HOML}{Higher-order Orthogonal Machine Learning}
\newacronym{dl}{DL}{Deep Learning}
\newacronym{rl}{RL}{Reinforcement Learning}
\newacronym{mbrl}{MBRL}{Model-Based Reinforcement Learning}
\newacronym{rlhf}{RLHF}{Reinforcement Learning from Human Feedback}
\newacronym{misl}{MISL}{mutual information skill learning}
\newacronym{ssl}{SSL}{self-supervised learning}
\newacronym{cl}{CL}{Contrastive Learning}
\newacronym{dcl}{DCL}{Debiased Contrastive Learning}
\newacronym{scl}{SCL}{Spectral Contrastive Learning}
\newacronym{gcl}{GCL}{Graph Contrastive Learning}
\newacronym{alphacl}{$\alpha$-CL}{$\alpha$-Contrastive Learning}
\newacronym{arcl}{ArCL}{Augmentation-robust Contrastive Learning}
\newacronym{fce}{FCE}{Flow Contrastive Estimation}
\newacronym{pid}{PID}{parametric instance discrimination}
\newacronym{diet}{DIET}{Datum IndEx as its Target}
\newacronym{sdiet}{s-DIET}{Scaled DIET}
\newacronym{vince}{VINCE}{Variational InfoNCE}
\newacronym{rince}{RINCE}{Robust InfoNCE}
\newacronym{aggnce}{AggNCE}{Aggregated InfoNCE}
\newacronym{mcinfonce}{MCInfoNCE}{Monte-Carlo InfoNCE}
\newacronym{gmc}{GMC}{Geometric Multimodal Contrastive Learning}
\newacronym{looc}{LooC}{Leave-one-out Contrastive Learning}
\newacronym{npc}{NPC}{Negative-Positive Coupling}
\newacronym{cpc}{CPC}{Contrastive Predictive Coding}
\newacronym{nlp}{NLP}{Natural Language Processing}
\newacronym{gdl}{GDL}{Geometric Deep Learning}
\newacronym{msn}{MSN}{Masked Siamese Networks}
\newacronym{ifm}{IFM}{Implicit Feature Modification}
\newacronym{dnn}{DNN}{Deep Neural Network}
\newacronym{nn}{NN}{Neural Network}
\newacronym{ann}{ANN}{Artificial Neural Network}
\newacronym{fm}{FM}{Foundation Model}
\newacronym{llm}{LLM}{Large Language Model}
\newacronym{vlm}{VLM}{Vision-Language Model}
\newacronym{pcfg}{PCFG}{Probabilistic Context-Free Grammar}
\newacronym{icl}{ICL}{in-context learning}
\newacronym{ibi}{IBI}{Implicit Bayesian Inference}
\newacronym{cot}{CoT}{Chain-of-Thought}
\newacronym{nc}{NC}{Neural Collapse}
\newacronym{cdt}{CDT}{Class-Dependent Temperature}
\newacronym{mlp}{MLP}{Multi-Layer Perceptron}
\newacronym{fc}{FC}{Fully Connected}
\newacronym{strnn}{StrNN}{Structured Neural Network}
\newacronym{cn}{conv}{Convolutional layer}
\newacronym{cnn}{CNN}{Convolutional Neural Network}
\newacronym{gnn}{GNN}{Graph Neural Network}
\newacronym{ssm}{SSM}{State Space Model}
\newacronym{rnn}{RNN}{Recurrent Neural Network}
\newacronym{lstm}{LSTM}{Long Short-Term Memory}
\newacronym{gru}{GRU}{Gated Recurrent Unit}
\newacronym{relu}{ReLU}{Rectified Linear Unit}
\newacronym{bn}{BN}{Batch Normalization}
\newacronym{dbn}{DBN}{Decorrelated Batch Normalization}
\newacronym{gan}{GAN}{Generative Adversarial Network}
\newacronym{mdp}{MDP}{Markov Decision Process}
\newacronym{pomdp}{POMDP}{partially observable Markov Decision Process}
\newacronym{diayn}{DIAYN}{Diversity Is All You Need}
\newacronym{dads}{DADS}{DYnamics-Aware Discovery of Skills}
\newacronym{visr}{VISR}{Variational Intrinsic Successor Features}
\newacronym{lsd}{LSD}{Lipschitz-constrained Unsupervised Skill Discovery}
\newacronym{csd}{CSD}{Controllability-Aware Unsupervised Skill Discovery}
\newacronym{usd}{USD}{unsupervised skill discovery}
\newacronym{csf}{CSF}{Contrastive Successor Features}
\newacronym{sac}{SAC}{Soft Actor Critic}
\newacronym{a2c}{A2C}{Advantage Actor Critic}
\newacronym{sgd}{SGD}{Stochastic Gradient Descent}
\newacronym{adam}{ADAM}{Adaptive Moment Estimation}
\newacronym{svd}{SVD}{Singular Value Decomposition}
\newacronym{wls}{WLS}{Weighted Least Squares}
\newacronym{sam}{SAM}{Sharpness-Aware Minimization}
\newacronym{samba}{SAMBA}{SAM-Based Autoencoder}
\newacronym{vi}{VI}{Variational Inference}
\newacronym{mfvi}{MFVI}{Mean Field Variational Inference}
\newacronym[longplural=data generating processes]{dgp}{DGP}{data generating process}
\newacronym{map}{MAP}{Maximum A Posteriori}
\newacronym{mle}{MLE}{maximum likelihood estimation}
\newacronym{etf}{ETF}{Equiangular Tight Frame}
\newacronym{mse}{MSE}{Mean Squared Error}
\newacronym{mae}{MAE}{Mean Absolute Error}
\newacronym{ce}{{\text{\upshape CE}}}{cross entropy}
\newacronym{sid}{SID}{Structural Intervention Distance}
\newacronym{shd}{SHD}{Structural Hamming Distance}
\newacronym{mcc}{MCC}{Mean Correlation Coefficient}
\newacronym{mig}{MIG}{Mutual Information Gap}
\newacronym{dci}{DCI}{Disentanglement Completeness Informativeness score}
\newacronym{arc}{ARC}{Average Relative Confusion}
\newacronym{acr}{ACR}{Average Confusion Ratio}
\newacronym{api}{API}{Application Programming Interface}
\newacronym{cpu}{CPU}{Central Processing Unit}
\newacronym{gpu}{GPU}{Graphics Processing Unit}
\newacronym{lti}{LTI}{Linear Time-Invariant}
\newacronym{zoh}{ZOH}{Zero-Order Hold}
\newacronym{gt}{{\text{\upshape GT}}}{ground truth}
\newacronym{ood}{OOD}{out-of-distribution}
\newacronym{oov}{OOV}{out-of-variablme}
\newacronym{fsm}{FSM}{Finite State Machine}
\newacronym{rasp}{RASP}{Restricted-Access Sequence Processing Language}
\newacronym{ntk}{NTK}{Neural Tangent Kernel}
\newacronym{as}{a.s.}{almost surely}
\newacronym{alev}{a.e.}{almost everywhere}
\newacronym{sos}{SOS}{start-of-sequence}
\newacronym{eos}{EOS}{end-of-sequence}
\newacronym{prh}{PRH}{Platonic Representation Hypothesis}
\newacronym{lrh}{LRH}{Linear Representation Hypothesis}
\tikzstyle{state}=[
\tikzstyle{midstate}=[
\tikzstyle{smallstate}=[
\tikzset{
    point/.style = {circle, draw, inner sep=0.05cm,fill,node contents={}},
}
\tikzset{rotarrow/.pic={
\draw[thin,->] (-0.2,-0.2)  to [out=-60,in=60, looseness=4] ++(0,0.4) node [above=1mm] {\tikzpictext};
},
}
\title{
Skill Learning via Policy Diversity Yields Identifiable Representations for Reinforcement Learning
}
\author[1,2]{Patrik~Reizinger\thanks{Equal contribution. Correspondence to \href{mailto:patrik.reizinger@tuebingen.mpg.de}{\texttt{patrik.reizinger@tuebingen.mpg.de}}.}\hspace{0.4em}}
\author[3,4]{Bálint~Mucsányi$^*$}
\author[1,5]{Siyuan~Guo$^*$}
\author[6]{Benjamin~Eysenbach}
\author[1,2]{Bernhard~Schölkopf\thanks{Equal supervision.}\hspace{0.4em}}
\author[1,2]{Wieland~Brendel$^\dagger$}
\affil[1]{%
    Max Planck Institute for Intelligent Systems, Tübingen, Germany
}
\affil[2]{%
ELLIS Institute Tübingen, Tübingen, Germany
}
\affil[3]{%
    University of Tübingen, Tübingen, Germany
  }
\affil[4]{%
Tübingen AI Center, Tübingen, Germany
}
\affil[5]{%
University of Cambridge, Cambridge, United Kingdom
}
\affil[6]{%
Princeton University, Department of Computer Science, Princeton, United States
}
\begin{document}

\maketitle

\begin{abstract}
  
    Self-supervised feature learning and pretraining methods in reinforcement learning (RL) often rely on information-theoretic principles, termed mutual information skill learning (MISL). These methods aim to learn a representation of the environment while also incentivizing exploration thereof.
    However, the role of the representation and mutual information parametrization in MISL is not yet well understood theoretically.
    Our work investigates MISL through the lens of identifiable representation learning by focusing on the Contrastive Successor Features (CSF) method. We prove that CSF can provably recover the environment's ground-truth features up to a linear transformation due to
    the inner product parametrization of the features and skill diversity in a discriminative sense.
    This first identifiability guarantee for representation learning in RL also helps explain the implications of different mutual information objectives and the downsides of entropy regularizers.
    We empirically validate our claims in MuJoCo and DeepMind Control and show how CSF provably recovers the ground-truth features both from states and pixels.

\end{abstract}

\section{Introduction}

    The field of \gls{rl} faces several challenges, such as learning under sparse rewards, exploring the environment, and designing an appropriate reward function. Many solutions use different self-supervised approaches including curiosity, intrinsic motivation, or \gls{usd}~\citep{eysenbach_diversity_2018,sharma_dynamics-aware_2020,pathak_curiosity-driven_2017,pathak_self-supervised_nodate,sancaktar_regularity_2023,ha_world_2018}. These methods often use similar information-theoretic arguments to develop self-supervised objectives~\citep{park_metra_2024,park_geometry_2024,eysenbach_diversity_2018,sharma_dynamics-aware_2020,choi_variational_2021}. Some of these objectives are borrowed from \gls{cl}~\citep{eysenbach_contrastive_2022,hansen_fast_2019,park_deep_2021,laskin_unsupervised_2022}. \Gls{misl}~\citep{zheng_can_2024,eysenbach_diversity_2018,achiam2018variational,sharma_dynamics-aware_2020,mohamed_variational_2015, gregor_variational_2017} is a specific class of \gls{usd} methods that uses a \gls{mi} objective.
    \gls{misl} methods have wildly varying performance~\citep{park_metra_2024,zheng_can_2024}, which is theoretically not yet well understood. Empirical evidence suggests guidelines on, \eg, parametrizing the action-value function~\citep{zheng_can_2024,liu_single_2024}, but it does not explain how similar principles lead to such large differences.

    The connection between \gls{ssl} and \gls{misl} methods enables us to investigate this question. We build upon recent theoretical advancements in \gls{ssl}, particularly nonlinear \gls{ica}~\citep{zimmermann_contrastive_2021,hyvarinen_nonlinear_2019,reizinger_cross-entropy_2024,roeder_linear_2020} theory and \gls{crl}~\citep{scholkopf_towards_2021,wendong_causal_2023,reizinger_identifiable_2024,rajendran_interventional_2023}.
    Identifiability results derive guarantees on learning the latent factors from data, \ie, recovering the underlying \gls{dgp} from high-level observations such as pixels. Identifiability results are relevant for \gls{rl} in the partially observable Markov Decision Processes (POMDPs), as the states are not always observed. Thus, the goal is to provably infer the ground-truth states from observations.
    Intuitively, identifiability results aim to learn a representative world model, which they can only achieve when the training samples cover all aspects thereof. Thus, these methods operate under diversity (also called sufficient variability) conditions, which mirror the goal of \gls{usd} to learn diverse skills and aid exploration.

     \begin{figure}
        \centering
  
\tikzset {_0yo2k63xm/.code = {\pgfsetadditionalshadetransform{ \pgftransformshift{\pgfpoint{89.1 bp } { -108.9 bp }  }  \pgftransformscale{1.32 }  }}}
\pgfdeclareradialshading{_a4w4s76cw}{\pgfpoint{-72bp}{88bp}}{rgb(0bp)=(0.82,0.01,0.11);
rgb(0.08928571428571429bp)=(0.82,0.01,0.11);
rgb(25bp)=(1,1,1);
rgb(400bp)=(1,1,1)}
\tikzset{_nefk3g9na/.code = {\pgfsetadditionalshadetransform{\pgftransformshift{\pgfpoint{89.1 bp } { -108.9 bp }  }  \pgftransformscale{1.32 } }}}
\pgfdeclareradialshading{_dl703t3yl} { \pgfpoint{-72bp} {88bp}} {color(0bp)=(transparent!29);
color(0.08928571428571429bp)=(transparent!29);
color(25bp)=(transparent!34);
color(400bp)=(transparent!34)} 
\pgfdeclarefading{_psr4id62g}{\tikz \fill[shading=_dl703t3yl,_nefk3g9na] (0,0) rectangle (50bp,50bp); } 

  
\tikzset {_03hs4cs6b/.code = {\pgfsetadditionalshadetransform{ \pgftransformshift{\pgfpoint{89.1 bp } { -108.9 bp }  }  \pgftransformscale{1.32 }  }}}
\pgfdeclareradialshading{_kowdvsgts}{\pgfpoint{-72bp}{88bp}}{rgb(0bp)=(0.29,0.56,0.89);
rgb(0.08928571428571429bp)=(0.29,0.56,0.89);
rgb(25bp)=(1,1,1);
rgb(400bp)=(1,1,1)}
\tikzset{_392g5y07h/.code = {\pgfsetadditionalshadetransform{\pgftransformshift{\pgfpoint{89.1 bp } { -108.9 bp }  }  \pgftransformscale{1.32 } }}}
\pgfdeclareradialshading{_wzlb8c0aj} { \pgfpoint{-72bp} {88bp}} {color(0bp)=(transparent!40);
color(0.08928571428571429bp)=(transparent!40);
color(25bp)=(transparent!34);
color(400bp)=(transparent!34)} 
\pgfdeclarefading{_sfh3s6qz2}{\tikz \fill[shading=_wzlb8c0aj,_392g5y07h] (0,0) rectangle (50bp,50bp); } 
\tikzset{every picture/.style={line width=0.75pt}} 

\begin{tikzpicture}[x=0.75pt,y=0.75pt,yscale=-1,xscale=1,  every node/.style={font=\footnotesize}, >={Latex}]

\draw  [draw opacity=0][shading=_a4w4s76cw,_0yo2k63xm,path fading= _psr4id62g ,fading transform={xshift=2}] (106.89,93.82) .. controls (110.44,97.85) and (105.21,108.26) .. (95.2,117.08) .. controls (85.19,125.89) and (74.2,129.77) .. (70.65,125.74) .. controls (67.1,121.71) and (72.34,111.3) .. (82.34,102.48) .. controls (92.35,93.67) and (103.34,89.79) .. (106.89,93.82) -- cycle ;
\draw  [draw opacity=0][shading=_kowdvsgts,_03hs4cs6b,path fading= _sfh3s6qz2 ,fading transform={xshift=2}] (40.35,52.91) .. controls (40.35,45.29) and (51.16,39.12) .. (64.5,39.12) .. controls (77.84,39.12) and (88.65,45.29) .. (88.65,52.91) .. controls (88.65,60.53) and (77.84,66.71) .. (64.5,66.71) .. controls (51.16,66.71) and (40.35,60.53) .. (40.35,52.91) -- cycle ;

\draw (64.5,83.62) circle [radius=34pt];

\node[circle, fill=figblue, inner sep=1.5pt,] (circ1) at (51.39,52.91) {};
\node[circle, fill=figblue, inner sep=1.5pt,] (circ2) at (75.39,44.91) {};

\node[draw, rectangle, align=center, inner sep=2pt, fill=gray!60] (skill) at (133,81) {
  skills\\$p(z)$
};
\node[draw, rectangle, align=center, inner sep=2pt, fill=figgreen!70] (policy) at (210,81) {
  policy\\$\pi(a \mid \phi(o), z)$
};
\node[draw, rectangle, align=center, inner sep=2pt, fill=gray!60] (env) at (300,81) {
  environment\\$p(s' \mid s, a)$
};
\node[draw, rectangle, align=center, inner sep=2pt, fill=figgreen!80] (phi) at (380,81) {
  encoder\\$\phi(o),\phi(o')$
};
\node[align=center, inner sep=2pt] (critic) at (480,81) {
  critic\\$q(z_i|\phi(o), \phi(o'))$
};
\node[align=center] (reward) at (210,31) {
  reward: $\max r_z(\phi(o),\phi(o'))$
};

\path[->, thick] (skill) edge (policy);
\path[->, thick] (policy) edge (env);
\path[->, thick] (env) edge (phi);
\path[->, thick] (phi) edge (critic);
\path[->, thick] (policy) edge (reward);

\draw[->, thick, figblue] (64.5,83.62) -- node[near start,left] (zi) {$z_i$} (64.5,39.12); 
\draw[->, thick, figred] (64.5,83.62) -- node[left] {$z_j$} (95.2,117.08); 

\draw[->, thick, bend right=45] 
  ([xshift=4pt]env.east) 
  --
  ([yshift=25pt,xshift=4pt]env.east) 
  -- node[above] {$s'$}
  ([yshift=25pt,xshift=10pt]skill.east) 
  -- 
  ([xshift=10pt]skill.east);

\draw[<-, thick, dashed, figblue!90] (zi) .. controls (180.66,60.12) and (287.74,33.56) .. node[pos=0.85, above] {$(\phi(o), \phi(o'))\mapsto z_i$} (critic);


\end{tikzpicture}
        \caption{\textbf{The success of \acrfull{misl}~\citep{zheng_can_2024} can be explained by learning identifiable features}:
        The \gls{misl} method \acrfull{csf}~\citep{zheng_can_2024} uses uniformly drawn skills on the hypersphere to learn a skill-conditioned policy by maximizing skill diversity via \eqref{eq:csf_policy}, effectively covering \Sd (spanning \rr{d}). \acrshort{csf} learns an encoder $\phi$ to map observations $o$ to features $\phi(o)$, and uses an inner product parametrization in the critic $q(z_i|\phi(o),\phi(o'))$, where $q$ is trained via a contrastive loss \eqref{eq:csf_cl} to infer the skill $z_i$ from features of consecutive states. Intuitively, this is only possible if each skill is representative of only a subset of states.
        By putting assumptions on the skill-conditioned feature differences $p(s'-s|z)$ and their marginal, we can adapt identifiability results from nonlinear \gls{ica} theory~\citep{reizinger_cross-entropy_2024}. These insights
        show that the above components lead to
        feature identifiability up to a linear transformation (i.e., $\phi(o')-\phi(o) = \mat{A}(s'- s)$ for some matrix $\mat{A}$).
        The skill distribution and the environment dynamics are
        {\color{gray}fixed}, whereas the policy and the encoder are {\color{figgreen}learned}.
        \textbf{Summary:} Our theoretical results help explain the success of prior methods, providing an explanation for why prior work has found some objectives and parametrizations to outperform others
        }
        \vspace{-1em}
        \label{fig:fig1}
    \end{figure}

    \gls{ica} theory shows that estimating \gls{mi} is generally insufficient to learn a ``useful'' representation without further assumptions~\citep{tschannen_mutual_2020,roeder_linear_2020,hyvarinen_nonlinear_1999,locatello_challenging_2019,reizinger_cross-entropy_2024}. Thus, we focus on identifying the distinguishing factors of successful \gls{misl} methods, following~\citep{reizinger_cross-entropy_2024,park_linear_2023}. We show that
    \begin{center}
        \textit{The good performance of \gls{misl} hinges on learning representations via mutual information estimation under diverse policies and an inner product parametrization of the model.}
    \end{center}
    Identifiability also sheds light on the limitations of some methods and design choices, \eg, why the maximum-entropy policy is suboptimal in skill learning, or why feature parametrization matters. By drawing on diversity and variability assumptions from the \gls{ica} and \gls{crl} literatures, we formalize what constitutes a diverse policy and analyze the role of feature dimensionality and skill space coverage, leading to practical insights.
    Our \textbf{contributions} are (\cref{fig:fig1}):
    \begin{itemize}[nolistsep, leftmargin=*]
        \item We explain the success of \acrfull{misl} as the interplay of \gls{mi} estimation under diverse policies and inner product model parametrization, leading to the first identifiability results in \gls{rl} for \acrfull{csf} (\cref{subsec:ident});
        \item Our theoretical results enable us to quantify what a diverse policy means and to pinpoint limitations of previous methods, leading to practical recommendations (\cref{subsec:insights});
        \item We validate our theoretical claims of feature identifiability in \gls{csf} in both state- and pixel-based
        MuJoCo and DeepMind Control environments (\cref{sec:experiments}).
    \end{itemize}

\section{Background}
\label{sec:bg}

    \paragraph{Notation.}
    We consider a \gls{pomdp} without a reward function. We denote successive states as $s$ and $s'$ with initial state distribution $p(s)$, actions as $a$, the state transition distribution as $p(s' \mid s, a)$, observed variables as $o=g(s), g\colon \rr{d}\to \rr{D\geq d}$ being a deterministic generator function, representations or features as $\phi(o)$, and skills as $z_i$, which are all \glspl{rv} sampled from a prior skill distribution $p(z)$. 
    The skill-conditioned policy is $\pi(a \mid o, z)$, and the variational model for the critic is $q(z \mid \phi(o), \phi(o'))$.

    \subsection{Identifiability in \acrlong{ssl}.}

        Self-supervised methods aim to use a pretraining task to learn ``universal'' representations that facilitate solving downstream tasks, such as classification. Many \gls{ssl} methods are related to information-theoretic principles~\citep{zimmermann_contrastive_2021,bizeul_probabilistic_2024,liu_self-supervised_2022,shwartz-ziv_what_2022}. These methods often aim to learn representations such that similar samples have similar representations and \textit{dis}similar samples \textit{dis}similar representations~\citep{wang_understanding_2020}.
        Recent advancements in the nonlinear \gls{ica} literature explained the success of many contrastive \gls{ssl} algorithms by proving their identifiability~\citep{zimmermann_contrastive_2021,rusak_infonce_2024,reizinger_cross-entropy_2024}. Identifiability means that, assuming an underlying \gls{dgp} for the data, the corresponding ``ground-truth'' latent factors can be recovered up to simple transformations (such as linear maps).
        In the nonlinear case, identifiability is only possible with further assumptions~\citep{darmois1951analyse, hyvarinen_nonlinear_1999,locatello_challenging_2019}. A prominent family is that of \textit{auxiliary} variable methods (where the latents are conditionally independent given the auxiliary variable)~\citep{hyvarinen_nonlinear_2019, gresele_incomplete_2019, khemakhem_variational_2020, halva_disentangling_2021,hyvarinen_unsupervised_2016,khemakhem_ice-beem_2020,locatello_weakly-supervised_2020,morioka_connectivity-contrastive_2023,morioka_independent_2021}---as we will show, skills in \gls{misl} can also be interpreted as auxiliary variables. Intuitively, diverse skills are representative of a set of \emph{distinct} states. To prove identifiability, \gls{ica} usually assumes specific \glspl{dgp}, such as energy-based models or inner product parametrization.
        Recently, \citet{reizinger_cross-entropy_2024} showed that the cross-entropy loss is key to explaining why many (self-)supervised deep learning models learn useful (identifiable) representations. We use this insight to prove the identifiability of \gls{csf}.

    \subsection{\Acrfull{misl}}\label{subsec:bg_misl}
        While reinforcement learning (RL) is typically cast as a problem of learning a single policy to maximize a scalar reward function~\citep{sutton1998reinforcement}, recent research focused on self-supervised objectives, inspired by information theory, to incentivize exploration while side-stepping the problem of reward specification.
        The promise of such methods is to learn ``universal'' representations that can be used to solve diverse downstream tasks.
        Paradigms include curiosity~\citep{pathak_curiosity-driven_2017,burda_large-scale_nodate}, regularity~\citep{sancaktar_regularity_2023}, model uncertainty~\citep{treven_optimistic_2023}, and disagreement~\citep{pathak_self-supervised_nodate,sekar_planning_2020,mendonca_alan_2023}, with works showing connections to \gls{cl}~\citep{eysenbach_contrastive_2022,eysenbach_contrastive_2023,zheng_contrastive_2023}.
        Other works in \acrfull{usd} aim to learn a set of policies (i.e., ``skills'') that span the space of behaviors that an agent might perform in a \gls{mdp}~\citep{eysenbach_diversity_2018, achiam2018variational, zheng_can_2024, sharma_dynamics-aware_2020, park2022lipschitz, sutton1998reinforcement}. 
        \gls{misl} is a subset of \gls{usd} methods that relies on information-theoretic principles~\citep{zheng_can_2024,park_metra_2024,sharma_dynamics-aware_2020,eysenbach_diversity_2018}.
        \citet{yang_task_2025} investigates \gls{misl} from a skill learning perspective, though they define the skill to include the parameters of the policy network.

        \paragraph{Representation learning.}
            \gls{misl} is based on representation learning, providing a form of information bottleneck, which is crucial for success~\citep{zheng_can_2024,park_lipschitz-constrained_2021}.
            The learned representation structures the latent space to prescribe a relationship between $(s, s')$ and $z_i.$ Intuitively, it contains information about state transitions and skills, and is usually constrained to the (unit) hypersphere, akin to the common choice in \gls{cl}~\citep{chen_simple_2020}. To reflect the dynamics of the environment, \ie, the relationship between consecutive states $(s,s'),$ it is common to learn feature differences, \ie, $\phi(o')-\phi(o)$ with encoder $\phi$. For a detailed review on representation learning in \gls{rl}, \cf \citet{echchahed_survey_2025}.

        \paragraph{Diverse skills: mixture policies.}
            \gls{misl} aims to learn distinguishable, \ie, \textit{diverse} skills---assuming that solving a variety of tasks requires a versatile policy.
            This is implemented via a skill-conditioned policy, \ie, conditioning on $z_i,$ yielding $\pi(a|o,z)$ which is trained to maximize diversity. Intuitively, the marginal policy $\pi(a|s) = \int \pi(a|o,z)p(z)$ can be thought of as a mixture policy, where each mixture component is a different tool in the agent's toolbox (\cf \cref{ex:intuition}).
            Diversity means that a discriminative model can uniquely infer the skill $z_i$ from consecutive states $(s,s')$. This notion relates to sufficient variability conditions in \gls{ica}~\citep{hyvarinen_unsupervised_2016,hyvarinen_nonlinear_2019,khemakhem_ice-beem_2020}
            and interventional discrepancy from \gls{crl}~\citep{wendong_causal_2023}.
            Formally:
       \begin{definition}[Diverse skill-conditioned policies]\label{def:diverse_policy}
            We call a skill-conditioned policy $\pi(a|o,z)$ diverse if an ideal discriminative model can uniquely infer the skill from consecutive states $(s,s')$. Alternatively, if for given state transitions $p(s'|s, a)$, the integral $\int p(s'|s,a)\pi(a|o,z_i)p(s|z_i)ds$ differs almost surely for any $z_i, z_{j\neq i}.$
        \end{definition}

        \paragraph{Architecture: inner product parametrization.}
            \gls{misl} objectives approximate \gls{mi} via an \gls{elbo}, for which they require a variational approximation. The variational posterior (\ie, the critic or the $Q$-value function) is often parametrized as an inner product, which is critical for achieving great performance~\citep{zheng_can_2024}, though an explanation is yet to be provided.
            This parametrization is prevalent in \gls{ssl}, and was theoretically shown to be crucial for identifiability guarantees~\citep{roeder_linear_2020,zimmermann_contrastive_2021,hyvarinen_nonlinear_2019,hyvarinen_unsupervised_2016,khemakhem_ice-beem_2020,reizinger_cross-entropy_2024}.

     \paragraph{\Acrfull{csf}~\citep{zheng_can_2024}.}
        Our identifiability proof in \cref{sec:theory} is for a prototypical \gls{misl} method, \gls{csf}~\citep{zheng_can_2024}, which is representative of prior work~\citep{park_metra_2024, gregor_variational_2017, warde2018unsupervised}.
        \gls{csf} learns both a feature representation via an encoder and a skill-conditioned policy; the skills are represented by vectors $z$ drawn uniformly from the hypersphere (\cf \cref{fig:fig1}).\\
        \textbf{State representation.}
        \gls{csf} learns a probabilistic critic $q(z|\phi(o), \phi(o'))$ with encoder $\phi$ to discriminate the skills based on consecutive  observations $(o,o')$ corresponding to the state transition $(s,s')$. The encoder can be trained either from direct state observations or from pixels. The loss is a contrastive lower bound on the mutual information $I(s,s';z)$:
         \begin{equation}
            q(z_i|\phi(o), \phi(o')) = \dfrac{p(z_i) \exp\brackets{\transpose{\parenthesis{\phi(o')-{\phi(o)}}}z_i}}{\expectation{p(z)}\exp\brackets{\transpose{\parenthesis{\phi(o')-\phi(o)}}z}},\label{eq:csf_cl}
        \end{equation}
        which is equivalent to a cross-entropy loss, as it was shown for different parametrizations~\citep{hyvarinen_unsupervised_2016,hyvarinen_nonlinear_2019,zimmermann_contrastive_2021,rusak_infonce_2024}.
        This loss can be equivalently seen as a parametric instance discrimination objective~\citep{wu_unsupervised_2018, oord_representation_2019,he2020momentum} on the feature difference \parenthesis{\phi(o')-{\phi(o)}}, akin to formulations in prior work~\citep{ibrahim_occams_2024,reizinger_cross-entropy_2024}.
         Prior work has shown that parametrizing the critic as a log-linear model with an
        inner product parametrization is crucial for identifiability
    \citep{hyvarinen_nonlinear_2019,roeder_linear_2020,zimmermann_contrastive_2021,reizinger_cross-entropy_2024}.\\
            \textbf{The policy.}
            The skill-conditioned policy $\pi(a \mid \phi(o), z)$ is learned by using RL to optimize the reward function $r_z(\phi(o), \phi(o')) = (\phi(o') - \phi(o))^\top z$, where $\phi$ is the same encoder as above:
                \begin{align}
                    &\pi \!=\! \arg\max_{\pi} \mathbb{E}_{p(z)}\expectation{\pi(\cdot \mid \cdot, z)}\hspace{-0.3em}\left[ \sum_{t=0}^\infty \gamma^t r_z(\phi(o), \phi(x'))\!\right] \!\!=\! \mathbb{E}_{p(z)} \hspace{-0.3em}\left[\!\expectation{\pi(\cdot \mid \cdot, z)}\hspace{-0.3em}\!\left[ \sum_{t=0}^\infty \gamma^t (\phi(o') \!-\! \phi(o))\!\right]^\top \hspace{-0.3em}z\right]\!\!, \\
                    & \text{where} \quad s' \sim p(s' \mid s,a), \quad
                    a \sim \pi(a \mid o,z), \quad
                    p(z) \stackrel{d}{=} \text{Uniform}(\Sd[d-1]). \label{eq:csf_policy}
                \end{align}

    \subsection{Connections between \gls{rl}, \gls{ssl}, and \gls{crl}.}
        The notion of agency is central in \gls{rl}, but it is not unique to it. When the causality literature~\citep{pearl_causality_2009,spirtes2000causation} reasons about interventions, it (implicitly) assumes that an agent could change the environment. Causality describes cause-effect relationships with \glspl*{sem}~\citep{pearl_causality_2009,spirtes2000causation}. To identify causal relationships, interventional data is often required.
        \gls{crl} methods are often tested on tasks involving an agent such as a robotic arm or a control system~\citep{scholkopf_towards_2021,liu_causal_2023,lippe_citris_2022,lippe_icitris_2022,yang_causalvae_2023}.
        \citet{rajendran_interventional_2023} showed how to design interventions (\ie, the policy) for control systems to provably identify the world model. This was an important conceptual step to connect the two fields, as conventional identifiability theory assumes access to pre-collected data.
        The problem of reinforcement learning is explicitly about collecting data (i.e., exploration); RL methods (such as \gls{misl}) based on self-supervised learning (including \gls{cl}) provide another missing link
        \citep{eysenbach_contrastive_2022,eysenbach_contrastive_2023,zheng_contrastive_2023,zimmermann_contrastive_2021}, especially given that there are close connections between \gls{cl}, nonlinear \gls{ica}, and identifiability~\citep{zimmermann_contrastive_2021,rusak_infonce_2024}.

        \paragraph{Modeling the \gls{misl} problem as a \acrfull{dgp}.}
        Identifiability guarantees require assumptions on the \gls{dgp}, \ie, we need to connect the \gls{pomdp} of the \gls{rl} problem to a \gls{dgp}.
        This corresponds to positing a probabilistic model of states, actions, transitions, and skills.
        To match the \gls{ica} literature, we model the skills as a set, with each skill having a corresponding high-dimensional unit vector $z_i\in\Sd$.
        This is slightly different from how \gls{misl} methods handle the skills by sampling them for each rollout from, \eg, from a uniform $p(z).$
        However, as identifiability guarantees only require that the skills span $\rr{d},$ these modeling choices are compatible, as with enough samples, the skill vectors space $\rr{d}$ almost surely.
        The \gls{pomdp} includes the model of the state transitions $p(s'|s,a),$ from which we observe $a$, but might not directly observe $s$, only a function thereof via the generator $o=g(s).$ The goal of \gls{ica} is to invert $g,$ \ie, to extract the state $s$ from $o$.
        ICA further requires making assumptions about the probabilistic model, typically about a conditional distribution. As \gls{misl} methods aim to infer the skill from state pairs, we will assume a specific form for the conditional $p(s'-s|z),$ where the feature difference $s'-s$ is defined in $\Sd.$

\section{Identifiability insights in \acrfull{misl}}\label{sec:theory}

    \paragraph{Motivation.}
        Recent works have shown fundamental connections between \gls{rl}, \gls{ssl}, and causality, pointing out that the borders between these fields are not as crisp as one might have expected. For example, there is a long line of work using self-supervised learning to drive \gls{rl}, particularly in skill discovery~\citep{eysenbach_diversity_2018,eysenbach_contrastive_2022,sharma_dynamics-aware_2020,pathak_curiosity-driven_2017,pathak_self-supervised_nodate,hansen_fast_2019,park_metra_2024,park_controllability-aware_2023}. Indeed, the \gls{rl} problem is fundamentally about agents taking interventions (i.e., actions).
        \citet{reizinger_jacobian-based_2023} proved that identifiable representation learning can answer causal queries by extracting the causal \gls{dag} without the explicit notions of agency or interventions. \citet{rajendran_interventional_2023} demonstrated how intervention design in a simple control system can provide data satisfying the assumption for \gls{ica}. Most recently, \citet{reizinger_identifiable_2024} proposed a unifying framework encompassing interventional and multi-environmental (\ie, when distribution shifts occur) identifiability results for causal structure and representations with the statistical notion of exchangeability.
        In the same way that identifiability theory helps explain the success of self-supervised pretraining in computer vision~\citep{zimmermann_contrastive_2021,rusak_infonce_2024,ibrahim_occams_2024,reizinger_cross-entropy_2024}, we will use recent insights from identifiability to study self-supervised RL methods. Using
        \citet{zheng_can_2024} as a prototypical \gls{misl} method, we will provide theory to elucidate why and when \gls{misl} works by proving that \gls{csf} identifies the ground-truth states of the underlying \gls{mdp} (\cref{subsec:ident}). Our analysis will also use \gls{ica} theory to reason about failure cases and to formulate practical recommendations (\cref{subsec:insights}).

    \paragraph{Intuition.}
        \gls{rl} and representation learning methods are often pretrained via self-supervised tasks to learn a ``universal'' representation that can solve many downstream tasks (\cref{subsec:bg_misl}).
        \begin{center}
            {\textit{Our insight is that learning diverse skills is equivalent to learning to distinguish data under different distribution shifts or interventions, and this leads to \gls{rl} agents identifying the ground-truth states of the underlying \gls{pomdp} up to a linear transformation.}}
        \end{center}
        Before analyzing the relevant technical assumptions, we provide an illustrative example of how distinguishing skills can be useful to learn the states of the underlying \gls{pomdp}:
        \begin{example}\label{ex:intuition}
            Assume that a robot moves around in a maze to create a map of it. However, it does not have access to other sensory information but a camera. To create the map, \ie, to learn the underlying state information such as the position of the robot, the walls, or other objects, it needs to move around to collect representative images.
            The ICA setting assumes that we already have such images and aims to reconstruct the state.
            Skill-based \gls{rl} solves a harder problem, as it also needs to learn a policy to explore while learning the underlying state representations. The question we answer in this work is: do those representations identify important quantities such as position and orientation?
        \end{example}

    \subsection{The Identifiability of \acrfull{csf}}\label{subsec:ident}

        Our key insight is to analyze recent advances in \gls{misl}~\citep{eysenbach_diversity_2018,hansen_fast_2019,sharma_dynamics-aware_2020,park_lipschitz-constrained_2021,eysenbach_contrastive_2022,park_controllability-aware_2023,park_metra_2024,zheng_can_2024} through the lens of  identifiability theory:
        \vspace{-3pt}
        \begin{center}
            \textit{The success of \gls{misl} methods, including the role of diversity and importance of a linearly parametrized critic, can be explained by identifiability theory.}
        \end{center}
        \vspace{-3pt}
        To show the identifiability of the \gls{csf} features, for this section, we assume that we have access to a diverse skill-conditioned policy and recall how we related the \gls{pomdp} to a \gls{dgp}. We proceed in the following steps:
        \begin{enumerate}[label=(\roman*)]
            \item We show that given a diverse policy, the collected data (state trajectories or observations thereof) satisfy the assumptions required for identifiability in \gls{ica}.
            \item Then we investigate what this identifiability result implies for \gls{csf}.
        \end{enumerate}

        \paragraph{Matching the assumptions of \gls{csf} to \gls{ica}.}
        We start by investigating whether the assumptions of nonlinear \gls{ica} theory match those in \gls{misl}.
        When the following assumptions are fulfilled, the features are identified up to a linear transformation, \ie, one can fit a linear map between the features learned by the model and the true ones (\eg, if one has access to them, such as in a simulator).
        \begin{assum}[Adapted from~\citep{reizinger_cross-entropy_2024} (Assm.~1C)]\label{assum:rl_ident}
            We assume that for consecutive states $s, s' \in \rr{d}$,  $(s'-s)\in\Sd$ and consider skills $z_i\in \Sd$ that satisfy the following.
            \begin{enumerate}[label=(\roman*), leftmargin=*]
                \item The finite set of skills is unit-normalized and forms an affine generator system of \rr{d} (\cref{def:affine_generator}).
                \item The skill-conditioned features $p(s'-s|z)$ follow a \gls{vmf} distribution on the hypersphere with mean $z$:
                \vspace{-6pt}
                \begin{equation}
                     (s'-s) \sim p(s'-s|z) \propto e^{\kappa \langle z, s'-s \rangle}. \label{eq:vmf}
                     \vspace{-6pt}
                \end{equation}
                \vspace{-3pt}
                \item The marginal of the features $(s'-s)$ is uniform  on the hypersphere.
                \item Each pair of features corresponds to one skill. That is, an ideal discriminator can uniquely map $(s'-s) \mapsto z$, yielding diverse skills (\cf \cref{def:diverse_policy}).
                \item The critic $q(z|\phi(o),\phi(o'))$ uses an encoder $\phi:\rr{D}\to\rr{d}$ to learn the features with an inner product parametrization $\transpose{\brackets{\phi(o')-\phi(o)}}z_i$, it optimizes a contrastive objective~\eqref{eq:csf_cl}, and is expressive enough to reach the global optimum of the objective.
                \item The observations $o$ are generated by passing the latent state $s$ through a continuous and injective generator function $g\colon \Sd\! \rightarrow \rr{D}$, \ie, $o = g(s)$, where $D\geq d$.
            \end{enumerate}
        \end{assum}

        \begin{proof}[Assumption feasibility]
            As we are modeling a practical scenario, we need to investigate whether these assumptions are realistic:
            \begin{enumerate}[label=(\roman*), leftmargin=*]
                \item In \gls{csf} skills are drawn uniformly from the hypersphere, and if we have sufficiently many of them, they span \rr{d} almost surely, thus they almost surely form an affine generator system. We demonstrate that this setup is sufficient, but not necessary: a set of discrete skills also leads to high identifiability scores (\cref{fig:skill_diversity}).
                \item Empirical observations show that the learned  features $\phi(o')-\phi(o)$ follow a \gls{vmf} for \gls{csf}~\citep[Fig.~2(a-b)]{zheng_can_2024}
                \item Empirical observations  show the learned  features $\phi(o')-\phi(o)$ follow a uniform marginal on the hypersphere for \gls{csf}~\citep[Fig.~2(c)]{zheng_can_2024}
                \item The policy optimizes $\transpose{\brackets{\phi(o')-\phi(o)}}z_i$. As we show in \cref{lem:csf_diversity}, a maximum entropy policy is not diverse. Applying our argument to pairs of skills, one can always improve diversity if the state transitions depend on any one of those two skills (as opposed to no skill dependence, which is implied by a maximum entropy policy), which implies that each state transition can be mapped to a single skill. Refer to \cref{lem:csf_diversity} for details.
                \item The neural networks used for training use an inner product parametrization, and empirical evidence by \citep{zheng_can_2024} showed that this works the best in practice.
                \item In practice, MISL methods are either trained from directly observing the state (\ie, $g$ is the identity) or from pixels. Both cases reasonably fulfill the assumption.
            \end{enumerate}
            \vspace{-.6em}
        \end{proof}
        \vspace{-12pt}

    \paragraph{Identifiability of the underlying features.}
    Under \cref{assum:rl_ident}, the feature differences are identified up to a linear map, as shown by  \citet{reizinger_cross-entropy_2024}:
       \begin{theorem}[Self-Supervised Learning Identifiability~\citep{reizinger_cross-entropy_2024} (Thm.~1)]
        \label{thm:ident_theo_supervised}
            When \cref{assum:rl_ident} holds and a continuous encoder \(\phi\colon \mathbb{R}^D \rightarrow \mathbb{R}^d\) and a linear classifier \(Z\) globally minimize the cross-entropy objective, then the composition \(h = \phi \circ g\) is a linear map from \(\mathbb{S}^{d-1}\) to \(\mathbb{R}^d\), \ie, ${\phi(o')-\phi(o)}
                = \mat{A}\brackets{s'-s}$ where $\mat{A}\in\rr{d\times d}$ is a linear map.
        \end{theorem}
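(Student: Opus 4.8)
The plan is to treat this as an instance of the cross-entropy identifiability template of \citet{reizinger_cross-entropy_2024}: I would (a) characterize the global optimizer of the contrastive objective \eqref{eq:csf_cl} as an exact posterior-matching condition, (b) use the shared log-linear (inner-product) form of the true and learned skill posteriors to cancel every skill-independent factor, and (c) invoke the affine generator property of the skills (\cref{def:affine_generator}) to promote the resulting inner-product identities into a full-rank linear relation between the feature difference $\phi(o')-\phi(o)$ and the state difference $s'-s$.

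First I would pin down the optimum. Since \eqref{eq:csf_cl} is equivalent to a cross-entropy loss for predicting the skill $z$ from the transition, and cross-entropy exceeds the conditional entropy by exactly the expected $\mathrm{KL}$ divergence between the true posterior $p(z\mid s'-s)$ and the critic $q(z\mid\phi(o),\phi(o'))$, an encoder and classifier expressive enough to reach the global minimum (Assumption~(v)) must satisfy $q(z\mid\phi(o),\phi(o'))=p(z\mid s'-s)$ for almost every transition. I would then make both posteriors explicit. By Bayes' rule, the vMF likelihood \eqref{eq:vmf} together with the uniform feature marginal (Assumptions~(ii)--(iii)) gives the log-linear true posterior $p(z\mid s'-s)\propto p(z)\,e^{\kappa\langle z,\,s'-s\rangle}$, whose $z$-dependence enters only through $\kappa\langle z,\,s'-s\rangle$; the critic has the matching form $q(z\mid\phi)\propto p(z)\,e^{\langle Z z,\,\phi(o')-\phi(o)\rangle}$ with linear classifier $Z$.

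Next I would cancel the nuisance terms. Abbreviating $w:=\phi(o')-\phi(o)$ and $u:=s'-s$, taking for any two skills $z_i,z_0$ the log-ratio of the two matched posteriors removes both the (skill-independent) partition functions and the shared prior $p(z)$ that plays the role of the classifier bias, leaving
\begin{equation}
    \langle Z(z_i - z_0),\, w\rangle = \kappa\,\langle z_i - z_0,\, u\rangle \qquad \text{for every transition and every skill pair.}
\end{equation}
Because the skills form an affine generator system of \rr{d} (Assumption~(i)), I can pick $d$ linearly independent differences $z_{i_1}-z_0,\dots,z_{i_d}-z_0$; stacking the corresponding scalar identities and inverting that full-rank difference matrix yields $\transpose{Z}\,w=\kappa\,u$. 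The diversity condition (Assumption~(iv); \cref{lem:csf_diversity}) forces $Z$ to be full rank, so I can solve for $w$ to obtain $\phi(o')-\phi(o)=\kappa\,\invtranspose{Z}(s'-s)=:\mat{A}(s'-s)$ with $\mat{A}\in\rr{d\times d}$ (and $\mat{A}=\kappa\,\Id{d}$ in the raw-skill parametrization $Z=\Id{d}$ of \eqref{eq:csf_cl}). Fixing $s$ to a reference state shows $h=\phi\circ g$ is affine, hence linear on feature differences; injectivity and continuity of $g$ (Assumption~(vi)) make $\phi(o)=h(s)$ a well-defined continuous function of $s$ and upgrade the almost-everywhere identity to all states on \Sd.

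I expect the main obstacle to be establishing that the learned classifier $Z$ is full rank, since this is where the diversity assumption must do the real work: a rank-deficient $Z$ (the degenerate regime tied to a non-diverse, e.g.\ maximum-entropy, policy analyzed in \cref{lem:csf_diversity}) collapses distinct skills onto indistinguishable logits and destroys invertibility, so identifiability genuinely hinges on \cref{def:diverse_policy}. A secondary, more routine point is the bookkeeping of normalization constants --- verifying that the vMF normalizer $C_d(\kappa)$ and the posterior partition functions are skill-independent and therefore cancel exactly in the log-ratio, which is precisely what makes the inner-product matching an exact identity rather than an approximation.
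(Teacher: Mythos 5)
You should first be aware that the paper contains no proof of \cref{thm:ident_theo_supervised} at all: it is an imported result, cited (with notation adapted to the RL setting) from \citet{reizinger_cross-entropy_2024}, Thm.~1, and the paper's own work around it consists only of the ``Assumption feasibility'' discussion and \cref{cor:ident_successor}. Your proposal is therefore a blind reconstruction of the cited external proof, and it follows that proof's actual route faithfully and, in its essentials, correctly: global optimality plus realizability (Assumption~(v)) forces exact posterior matching; the shared log-linear structure lets the vMF normalizer, the partition functions, and the common prior cancel in log-odds; and the affine generator property (\cref{def:affine_generator}, via \cref{lem:affine_generator}) upgrades the pairwise scalar identities to the vector identity $\transpose{Z}\brackets{\phi(o')-\phi(o)} = \kappa\brackets{s'-s}$. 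Two refinements are worth making. First, the step you single out as the main obstacle --- full rank of the learned classifier $Z$ --- is easier than you anticipate and does not rest on the diversity condition (iv): once $\transpose{Z}\brackets{\phi(o')-\phi(o)} = \kappa\brackets{s'-s}$ holds for almost every transition, the right-hand side ranges over $\kappa$ times a full-measure subset of \Sd by the uniform-marginal assumption (iii), and any such subset spans \rr{d}; hence $\transpose{Z}$ is surjective, so invertible, with no separate discriminability argument. Diversity (iv) is better viewed as a consistency requirement on the data-generating process (it is what makes assumptions (ii)--(iii) attainable by a policy at all, \cf \cref{lem:csf_diversity}) rather than as the engine of invertibility. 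Second, your closing move --- fixing a reference state to conclude $\phi(o)=\mat{A}s$ up to an offset --- is not part of this theorem: it is precisely the content of \cref{cor:ident_successor}, which the paper proves separately from the linear parametrization; the theorem proper stops at the identity for feature differences. Neither point is a gap; your sketch is sound, and your parenthetical observation that CSF's raw parametrization (classifier fixed to the true skills, $Z=\Id{d}$) forces the stronger conclusion $\mat{A}=\kappa\Id{d}$ is also correct.
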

        \cref{thm:ident_theo_supervised} means that the features learned by \gls{misl}~\citep{zheng_can_2024} will correspond to the ground-truth states of the underlying \gls{pomdp} up to a linear transformation.
        As the linear map in \cref{thm:ident_theo_supervised} is the same for all states, this, combined with the linear parametrization, implies that the feature differences are also identified up to a linear transformation:
        \begin{prop}[Feature identifiability in \gls{csf}]\label{cor:ident_successor}
            \cref{thm:ident_theo_supervised} implies by the inner product parametrization of $q(z|s,s')$ that $\phi(o) = \mat{A}s$ and $\phi(o') = \mat{A}s'$ with the same \mat{A}, thus, the features are also identified up to a linear transformation.
        \end{prop}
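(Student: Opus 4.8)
The plan is to upgrade the difference-level identification of \cref{thm:ident_theo_supervised} to a statement about the \emph{absolute} features by exploiting that the map $\mat{A}$ produced there is a single, global linear map that does not depend on the particular transition $(s,s')$. Writing $h = \phi\circ g$ for the composition sending a latent state to its feature, \cref{thm:ident_theo_supervised} reads $h(s') - h(s) = \mat{A}\parenthesis{s'-s}$ for every admissible consecutive pair, with $\mat{A}\in\rr{d\times d}$ fixed. The first step is therefore a telescoping argument: define the residual $\psi(s) = h(s) - \mat{A}s$ and note that $\psi(s') - \psi(s) = \parenthesis{h(s')-h(s)} - \mat{A}\parenthesis{s'-s} = \zeros{d}$ for every such pair. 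Because a diverse policy (\cref{def:diverse_policy}) explores the whole state space, the reachable pairs connect any two states through a chain of admissible transitions, so $\psi$ is constant along this chain; fixing a reference state gives $\phi(o) = h(s) = \mat{A}s + c$ for a single offset vector $c\in\rr{d}$.

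The second step is to pin down $c=\zeros{d}$, which I expect to be the crux of the argument. The difficulty is intrinsic to the inner-product parametrization of the critic in \eqref{eq:csf_cl}: only the difference $\phi(o')-\phi(o)$ enters $q(z\mid\phi(o),\phi(o'))$, so the loss is invariant under the gauge transformation $\phi\mapsto\phi+c$ and cannot by itself fix the additive constant. I would remove this ambiguity using the normalization built into the representation, namely that both the latent state and the feature live on the unit hypersphere, so $s\in\Sd$ and $\norm{\phi(o)}=1$. Substituting $\phi(o)=\mat{A}s+c$ into $\norm{\mat{A}s+c}^2 = 1$ and comparing the identity at $s$ with the one at $-s\in\Sd$ cancels the quadratic term $\transpose{s}\transpose{\mat{A}}\mat{A}s$ and leaves $\transpose{c}\mat{A}s = 0$ for all $s\in\Sd$, i.e.\ $\transpose{\mat{A}}c=\zeros{d}$; since $\mat{A}$ is the full-rank $d\times d$ map of \cref{thm:ident_theo_supervised} (it must be invertible for the features to retain the state information), this forces $c=\zeros{d}$.

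With $c=\zeros{d}$ in hand the conclusion is immediate: $\phi(o)=\mat{A}s$, and applying the very same global map to the successor observation gives $\phi(o')=\mat{A}s'$ with the identical $\mat{A}$, so the absolute features—not merely their differences—are identified up to the linear transformation $\mat{A}$. The main obstacle, as flagged above, is the gauge freedom of the additive constant: the inner-product critic is blind to $c$, so it must be fixed by an external condition. If one is unwilling to assume that $\phi$ itself is unit-normalized, an alternative is to fix the gauge by centering (subtracting the feature mean and the corresponding state mean), in which case $c$ is absorbed by construction and the same $\mat{A}$ again relates the centered features to the centered states.
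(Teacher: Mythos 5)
Your proof is correct in substance and is in fact considerably more careful than the paper's own argument, which consists of a single line: by linearity of $\mat{A}$, write $\phi(o')-\phi(o)=\mat{A}\brackets{s'-s}=\mat{A}s'-\mat{A}s$, with the offset dismissed in a parenthetical remark that ``as they lie on \Sd, an offset is not possible.'' The paper thus never makes explicit the two steps you identify as the real content: (a) passing from an identity on differences over admissible pairs to an identity on absolute features requires your telescoping argument, i.e., that $\psi(s)=\phi(g(s))-\mat{A}s$ is constant on the transition-connected reachable set (the paper implicitly assumes this connectivity, which you correctly tie to the diverse policy of \cref{def:diverse_policy}); and (b) the additive constant $c$ is invisible to the critic \eqref{eq:csf_cl}, since the loss is gauge-invariant under $\phi\mapsto\phi+c$, so it cannot be fixed by optimality alone. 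Your antipodal-point computation ($\transpose{c}\mat{A}s=0$ for all $s\in\Sd$, hence $\transpose{\mat{A}}c=\zeros{d}$, hence $c=\zeros{d}$ by invertibility of $\mat{A}$) is algebraically sound, but note that it leans on hypotheses not contained in \cref{assum:rl_ident}: unit normalization of $\phi$ itself (the assumptions only place the ground-truth differences $s'-s$ and the skills on \Sd, so the learned features $\phi(o)$ generally lie on an ellipsoid, not the unit sphere), reachability of antipodal pairs, and full rank of $\mat{A}$. Your fallback---fixing the gauge by centering, which yields identifiability up to an affine (equivalently, centered-linear) map---is the version that actually follows from the paper's stated assumptions, and in that sense your write-up exposes a genuine imprecision in the paper's proposition as stated rather than a flaw in your own reasoning. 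What the paper's route buys is brevity; what yours buys is an honest account of exactly which extra condition (a normalization convention) is needed to claim $\phi(o)=\mat{A}s$ with no offset.
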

        \vspace{-10pt}
        \begin{proof}
            The proof follows from the linear parametrization of the model. By linear identifiability of the feature differences (as they lie on \Sd, an offset is not possible), we have
            \vspace{-6pt}
            \begin{align*}
               {\phi(o')-\phi(o)} &= \mat{A}\brackets{s'-s} = \brackets{\mat{A}s'-\mat{A}s}
               \vspace{-2.8\baselineskip}
            \end{align*}
            \vspace{-.85em}
        \end{proof}
        Having identifiability both for $s$ and $(s'-s)$ might suggest that it does not matter whether the objective optimizes a lower bound on $I(s,s';z)$ or $I(s;z).$ As we show in \cref{subsec:insights}, the difference lies in the additional geometric constraints on the latent space. Namely, there exist spurious solutions of the InfoNCE objective that do not preserve the structure of the latent space~\citep{wang_chaos_2022}.
        \vspace{-3pt}

    \subsection{Insights from ICA theory}\label{subsec:insights}

     \begin{figure}
        \centering
        \tikzset{every picture/.style={line width=0.75pt}} 

\begin{tikzpicture}[x=0.75pt,y=0.75pt,yscale=-.71,xscale=.71, >={Latex}, every node/.style={font=\scriptsize}]

\draw    (341,30) -- (341,230) ;

\node[draw, circle, minimum size=68pt, label={[above=5pt]{$I(s,s';z) \ : \ [ \phi(o) - \phi(o') ]^{\top} z$}}] (left) at (201.5,137.5) {};
\node[draw, circle, minimum size=68pt, label={[above=5pt]{$I(s;z) \ : \  \phi(s)^{\top} z$}}] (right) at (473.5,136.5) {};


\draw[->, thick, figred]
  (left.center) -- node[midway, below, text=figred] {$z_i$}
  ++(61.5, 11.9);


\draw[->, thick, figblue]
  ([yshift=-6pt]left.center) -- node[midway, above left, text=figblue] {$\phi(o') - \phi(o)$}
  ++(61.5, 11.9);

\draw[->, thick, figblue]
  (right.center) -- node[midway, above, text=figblue] {$\phi(o)$}
  ++(-62.5, -13);

\draw[->, thick, figred]
  ([yshift=-4.5pt]right.center) -- node[midway, above, text=figred] {$z_i$}
  ++(61.5, 11.9);

\draw[->, thick, figblue]
  (right.center) -- node[midway, below, text=figblue] {$\phi(o')$}
  ++(61.5, 11.9);

\draw (150,215) node [anchor=north west][inner sep=0.75pt]   {${\color{figblue}\phi (o') -\phi (o)} \ || \color{figred}\ z_{i} \ $};
\draw (422,215) node [anchor=north west][inner sep=0.75pt]  {${\color{figblue}\phi (o) ,\ \phi ( o') }\ || \color{figred}\ z_{i} \ $};

\end{tikzpicture}
        \caption{\textbf{Implications of optimizing different mutual information objectives $I(s, s'; z_i)$ vs. $I(s; z_i)$}: the reward function and the feature learning objective impose different inductive biases on the structure of the latent space. \textbf{Left:} using feature $\phi(s')-\phi(o)$ to parametrize $I(s, s'; z_i)$ ensures that the embeddings of consecutive states are close but distinct. \textbf{Right:} optimizing $I(s; z)$ does not impose a ``locality'' constraint on the embeddings of consecutive states while incentivizing that both should be parallel to skill $z_i$. This either results in collapsed (\ie, $\phi(o)=\phi(o')$) or antipodal features.
        \vspace{-1.25\baselineskip}
        }
        \label{fig:mi_insights}
    \end{figure}
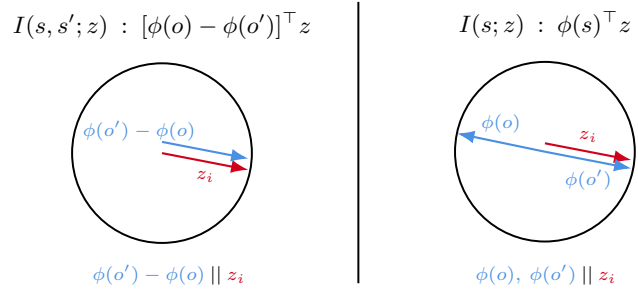

        \paragraph{Mutual information formulation matters for the geometry of feature space.}
            In the literature, there are many choices for optimizing \gls{mi}, namely, $I(s, s'; z)$ versus $I(s_0, s; z)$ versus $I(s; z)$---\cf \cref{fig:mi_insights} for a comparison.
            By looking into the policy and analyzing what maximizing $\expectation{s,z,a}\transpose{\brackets{\phi(o') - \phi(o)}}z$ means, we hope to shed light on the advantages of  $I(s, s'; z)$ over the latter versions.
            Maximizing the inner product $\transpose{\brackets{\phi(o') - \phi(o)}}z$ means that the difference $\phi(o') - \phi(o)$ needs to be parallel to $z$. This implies that neither $\phi(o)$ nor $\phi(o')$ can collapse to the same vector, as they need to be distinct such that their difference is parallel to $z$.
            On the other hand, optimizing $I(s; z)$ would mean two separate conditions for $\phi(o)$ and $\phi(o')$. But if both are parallel to $z$, then they are either parallel or antipodal. Importantly, neither are realistic modeling assumptions, as we assume that consecutive states should be close to each other in embedding space (but neither the same, nor very far apart, \cf \cref{assum:rl_ident}(ii)).
            The same argument holds for $I(s_0, s; z)$ with the difference of offsetting the whole space by the initial state. If the differences of $s-s_0$ and $s'-s_0$ are both parallel to $z$, then they are either parallel or antipodal.
            Instead, $I(s, s'; z)$ prescribes the ``closeness'' of consecutive states---though $\phi(o')-\phi(o)$ could in this case also be antipodal to $z_i$, \citep[Fig.~2b]{zheng_can_2024} suggests that this is not the case.

        \paragraph{A practical implication of diversity rewards.}
            A perhaps interesting interpretation of rewards such as \eqref{eq:csf_policy} is that it quantifies a notion of data diversity. From a theoretical perspective, diversity is a binary question, as it is required to make a matrix invertible. But this matrix can also be ill-conditioned but not rank-deficient, leading to performance deterioration~\citep{rajendran_interventional_2023}. Understanding when the reward is a good predictor of learning useful representations from a given data set  (\eg, in offline RL) is an interesting avenue for future work.

        \paragraph{Maximum entropy policies lead to worse performance.}
            Perhaps surprisingly, entropy is not a suitable quantity for skill diversity, as a maximum entropy policy breaks the dependence on the skill of the skill-conditioned policy:

            \begin{lem}[A maximum-entropy policy in CSF is not diverse]\label{lem:csf_diversity}
                A maximum entropy skill-conditioned policy $\pi(a|o,z)= \text{Uniform}$ is not diverse and cannot maximize the reward $\expectation{s,z,a}\transpose{\brackets{\phi(o') - \phi(o)}}z$.
            \end{lem}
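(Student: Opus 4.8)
The plan is to derive both claims—non-diversity and reward sub-optimality—from one structural observation: a uniform policy $\pi(a\mid o,z) = \mathrm{Uniform}$ does not depend on the skill $z$, so the whole induced law over transitions is skill-independent. First I would note that $\pi(a\mid o,z) = \pi(a\mid o)$, so rolling out this policy from a skill-independent initial state distribution produces a joint distribution over $(s,a,s')$ that is identical for every skill. In particular $p(s\mid z)$ and the next-state law $\int p(s'\mid s,a)\pi(a\mid o,z)p(s\mid z)\,ds$ coincide for all $z_i, z_j$.

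For the first claim I would substitute this into \cref{def:diverse_policy}: the integral defining diversity is constant in $z$, hence it cannot differ \acrshort{as} for distinct skills $z_i \neq z_j$. Equivalently, since the transition data $(s,s')$ is statistically identical across skills, no discriminator—however ideal—can recover $z$ from $(s,s')$ better than chance, so the policy fails the definition of diversity.

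For the second claim I would use that $\Delta\phi := \phi(o') - \phi(o) = \phi(g(s')) - \phi(g(s))$ is a deterministic function of $(s,s')$, and is therefore independent of $z$ under a uniform policy. The expected reward then factorizes, using $\expectation{z}\brackets{z}=0$ for $z\sim\mathrm{Uniform}(\Sd)$ (\cref{assum:rl_ident}(i), \eqref{eq:csf_policy}), as
\[
    \expectation{s,z,a}\transpose{\brackets{\phi(o') - \phi(o)}}z
    = \transpose{\bar{\Delta}}\,\expectation{z}\brackets{z} = 0,
    \qquad \bar{\Delta} := \expectation{s,a}\brackets{\phi(o') - \phi(o)},
\]
so the uniform policy attains reward exactly $0$. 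The zero mean follows from the antipodal (indeed full orthogonal) symmetry of the uniform measure on $\Sd$, whose unique invariant mean is the origin.

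Finally, to conclude sub-optimality I would exhibit a skill-dependent policy with strictly positive reward: whenever the dynamics are non-degenerate, the agent can, conditioned on $z$, select actions that steer $\Delta\phi$ to be positively aligned with $z$, giving $\expectation{z}\brackets{\transpose{\parenthesis{\expectation{s,a\mid z}\Delta\phi}}z} > 0 = 0$, strictly dominating the uniform policy. The main obstacle is precisely this step: in a fully degenerate environment where no action can influence $\Delta\phi$, every policy—including the uniform one—attains reward $0$, so strict sub-optimality requires a controllability assumption on the transition dynamics. I would therefore make this non-degeneracy explicit (it is already implicit in the diversity premise of \cref{assum:rl_ident}(iv), which demands that each transition be skill-informative) and note that under it the maximum-entropy policy is strictly dominated, establishing the lemma.
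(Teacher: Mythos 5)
Your proof is correct and starts from the same structural fact as the paper's---a uniform policy satisfies $\pi(a|o,z)=\pi(a|o)$, so the induced law of $(s,s')$, and hence of $\phi(o')-\phi(o)$, is identical across skills---but the two arguments then part ways. The paper reasons geometrically by contradiction: under skill-independence the expected feature difference is one fixed vector, the inner product $\transpose{\brackets{\phi(o') - \phi(o)}}z$ is maximized only by the unique skill parallel to that vector, and since uniformly drawn skills are almost surely distinct they cannot all attain the maximum. You instead evaluate the objective exactly: by skill-independence it factorizes as $\transpose{\bar{\Delta}}\,\expectation{p(z)}\brackets{z}=0$, where $\bar{\Delta}$ is the (skill-independent) mean feature difference, because the uniform measure on \Sd has zero mean; you then exhibit a skill-conditioned comparison policy with strictly positive reward. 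Your route is more quantitative and arguably more faithful to the lemma as stated, which concerns the expectation over $p(z)$ rather than per-skill maxima (the paper's step ``both skills cannot maximize the reward'' quietly equates the two). It also surfaces a gap that both proofs share but the paper leaves implicit: concluding strict sub-optimality requires a non-degeneracy or controllability assumption, since in an environment where no action can steer $\phi(o')-\phi(o)$ every policy earns the same reward and the uniform policy is trivially among the maximizers; you are right that this is what \cref{assum:rl_ident}(iv) and \cref{def:diverse_policy} implicitly demand, and making it explicit strengthens rather than weakens the lemma. One small caution: your zero-mean step relies on the uniform $p(z)$ of \eqref{eq:csf_policy}, not on \cref{assum:rl_ident}(i); for a finite skill set with nonzero empirical mean, the uniform policy's reward need not vanish, so your computation (like the paper's distinctness argument) is tied to CSF's spherical prior. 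Your handling of the non-diversity claim---observing that the integral in \cref{def:diverse_policy} is constant in $z$---coincides with the paper's implicit treatment.
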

            \begin{proof}[Indirect]
                Fix the initial state and assume that the skill-conditioned policy has maximum entropy, \ie, it follows a uniform distribution and maximizes the reward---given that the policy network is sufficiently flexible to express such a policy.  This implies that the expectation over the actions does not depend on $z$, yielding in expectation the same $\brackets{\phi(o') - \phi(o)}$ for each skill.
                \vspace{-6pt}
                \begin{align}
                    r_z(\phi(o),\phi(o'))&= \expectation{s,z,a}\transpose{\brackets{\phi(o') - \phi(o)}}z\\
                    &= \int_{s,z,a}\transpose{\brackets{\phi(o') - \phi(o)}}zp(s'|s,a)\pi(a|o,z)p(z)p(s)dsdzda
                    \vspace{-6pt}
                    \intertext{In this case, the skill-conditioned policy becomes independent of $z_i$, as the uniform distribution over the action space has maximum entropy. Substituting $\pi(a|o,z)=\pi(a|o)$ yields}
                    &= \int_{s,z,a}\transpose{\brackets{\phi(o') - \phi(o)}}zp(s'|s,a)\pi(a|o)p(z)p(s)dsdzda
                    \vspace{-8pt}
                    \intertext{and by reordering the terms, we get}
                    \vspace{-6pt}
                    &= \int_{s,z,a}\transpose{\brackets{\phi(o') - \phi(o)}}z\pi(a|o)p(s'|s,a)p(s)dsdap(z)dz.
                    \vspace{-6pt}
                \end{align}
                Note that $p(s'|s,a)$, $\pi(a|o)$, and $p(s)$ are independent of $z_i,$ thus, \brackets{\phi(o') - \phi(o)} are also independent of $z_i$. As a skill vector parallel to \brackets{\phi(o') - \phi(o)} maximizes the inner product, and the skills are on the unit hypersphere, this yields a unique solution. However, as the skills are drawn uniformly from the unit hypersphere, they are distinct. That is, the inner products will differ for $z_i \neq z_{j\neq i}$. Thus, both skills cannot maximize the reward, leading to a contradiction.
            \end{proof}

                Intuitively, if the actions---and, thus, the state transitions---do not depend on the skill, then from a given state pair $(s,s')$ it is impossible to infer the skill with the discriminative model $q(z|s,s'),$ and the reward cannot be optimal.
                 \begin{figure}
                    \centering
                    \includegraphics{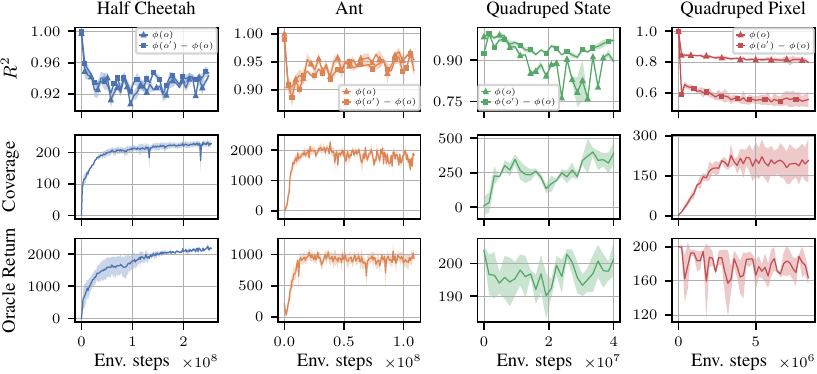}
                    \caption{\textbf{\gls{csf} identifies the underlying states in MuJoCo and DMC up to a linear transformation.}  \textbf{Top:} Identifiability of both features $\phi(o)$ and feature differences $\phi(o')-\phi(o)$, measured by the \gls{r2} score (higher is better); \textbf{Middle:} state coverage, indicating exploratory behavior; \textbf{Bottom:} oracle return indicating zero-shot task transfer performance. Error bars represent one standard deviation.
                    \vspace{-\baselineskip}}
                    \label{fig:results_mujoco}
                \end{figure}

\section{Experiments}
\label{sec:experiments}

    \paragraph{Setup.}
        We use the codebase of~\citet{zheng_can_2024} and run experiments in the MuJoCo and DeepMind Control (DMC) environments with the \gls{csf} algorithm. During self-supervised pretraining, we monitor the learned (successor) features. To evaluate identifiability, we set the feature dimensionality to match the number of ground-truth features.
        We use either states (Half Cheetah, Ant, Quadruped State) or pixels (Quadruped Pixel) as observations to learn the features $\phi(o)$. In the state-based environments, the encoder is, in principle, able to represent the identity map (leading to a perfect $R^2$ score); however, it is unclear whether such features would optimize the loss. Unless otherwise noted, error bars represent two standard deviations. Refer to \cref{sec:app_exp} for more details.

    \paragraph{Metrics.}
    The benefit of using simulated MuJoCo and DMC environments is having access to the ground truth states, enabling one to evaluate the relationship between the learned features $\phi(o)$ and the ground truth latent factors $s$.
    As the identifiability guarantees discussed in~\cref{thm:ident_theo_supervised,cor:ident_successor} hold up to a linear transformation, we fit a linear map \mat{A} between the features $\phi(o)$ and the ground truth states $s$ by minimizing $\Vert s - \mat{A}\phi(o)\Vert_2^2$ and report the coefficient of determination \gls*{r2}~\citep{wright1921correlation} of the linear fit.
    To connect downstream performance with state identifiability, we also report (i) the \textbf{state coverage} of agents using skills from $p(z)$ and (ii) the \textbf{oracle reward} from the best rollout among the sampled skills. The state coverage measures the effectiveness of exploration independent of any environment reward. The oracle reward evaluates \emph{zero-shot} skill transfer: a diverse skill set should naturally contain one that performs well on the downstream task. Importantly, we do not evaluate \emph{supervised} skill transfer, as it can lead to a collapse in the latent representations by learning shortcuts, akin to in computer vision~\citep{geirhos_shortcut_2020}.

    \paragraph{Results.} \cref{fig:results_mujoco} evaluates the identifiability of CSF across three state-based MDPs (Cols.~1-3) and one pixel-based environment (Col. 4). CSF shows \emph{strong identifiability} in the state-based environments for both features $\phi(o)$ and their differences $\phi(o') - \phi(o)$, aligning with~\cref{thm:ident_theo_supervised,cor:ident_successor}. In the state-based environments, it is by no means obvious that the encoder will keep all information about the underlying states, \eg, there could have existed a shortcut solution that optimizes the \gls{csf} objective while discarding some information about the states. In the pixel-based Quadruped environment,
    the individual features $\phi(o)$ still have a high $R^2$ score with the underlying states, while the feature differences exhibit a more moderate linear relationship.

    \begin{figure}
        \centering
        \begin{minipage}{0.48\textwidth}
            \centering
            \includegraphics[width=\linewidth]{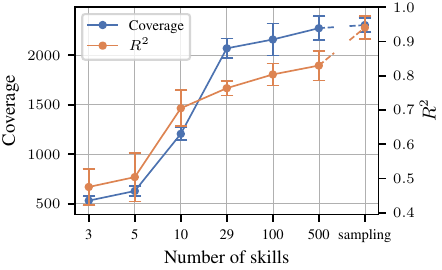}
            \caption{\textbf{The effect of skill diversity on state identifiability and coverage in the Ant environment.} Skills are sampled from $p(z)$ at the start of pretraining and kept fixed throughout, except in the `sampling' case where skills are redrawn from $p(z)$ during training, emulating an infinite set of skills. An insufficient number of skills violates \cref{assum:rl_ident}(i), leading to both a lower state coverage and \gls{r2} score.}
            \label{fig:skill_diversity}
        \end{minipage}
        \hfill
        \begin{minipage}{0.48\textwidth}
            \centering
            \includegraphics[width=\linewidth]{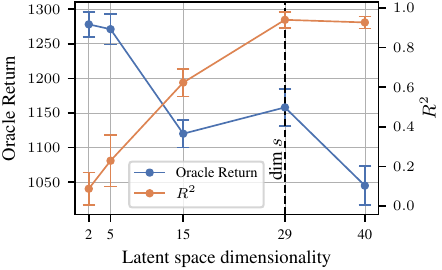}
            \caption{\textbf{The effect of latent space dimensionality on state identifiability and zero-shot task transfer in the Ant environment.} Linear identifiability requires that the feature space has at least as many dimensions as the true state (\cf \cref{assum:rl_ident}(v)).
            In contrast, a smaller latent space aids task transfer but has a non-monotonic relationship with dimensionality, with the state dimensionality being a local optimum.}
            \label{fig:latent_dim}
        \end{minipage}
        \vspace{-\baselineskip}
    \end{figure}

    Inspired by the diversity conditions in nonlinear \gls{ica}~\citep{hyvarinen_nonlinear_2019,rajendran_interventional_2023,wendong_causal_2023,reizinger_identifiable_2024},  we investigate the effect of skill diversity on state identifiability and coverage. Namely, \cref{thm:ident_theo_supervised,cor:ident_successor} only require that the skills span $\rr{d},$ suggesting that a limited set of skills might be sufficient to achieve identifiability.
    We vary the number of skills $z_i$ and use a fixed set of $z_i$ vectors.  This scheme is compared to the original version of sampling each time from the uniform $p(z)$. Using a small fixed set of pre-sampled skills is insufficient to cover the state space during pretraining or to yield identifiable representations, as shown by the peak coverage and corresponding $R^2$ score in \cref{fig:skill_diversity}. \Cref{sec:app_exp} shows further results for zero-shot task transfer.
    We also investigate how latent space dimensionality affects both identifiability and zero-shot task transfer---if the feature space is lower dimensional than the state space in the \gls{pomdp}, then all state information might be present but not linearly decodable.
    \Cref{fig:latent_dim} shows $R^2$ scores and zero-shot task transfer performance for varying latent space dimensionality. A low-dimensional latent space yields an information bottleneck that prevents the feature differences from encoding the ground truth states; however, CSF still performs well on task transfer.\footnote{CSF is designed for lower feature space dimensionality, and we empirically found it to be sensitive to increasing this hyperparameter.} Nonetheless, having a latent space dimensionality equal to the state dimensionality is a local maximizer of the oracle return. We show the effect of latent dimensionality on state coverage in \Cref{sec:app_exp}.

\section{Discussion}
\label{sec:discussion}

    \paragraph{Limitations.}
        Our work connected identifiability insights from nonlinear \acrfull{ica} to \acrfull{misl} and formulated practical insights. Our result is based on the observations of \citet{zheng_can_2024}, showing that the assumptions of nonlinear \gls{ica} theory can be applied to some \gls{rl} environments. However, it requires further research whether similar results hold in a broader range of scenarios.

    \paragraph{Extension to related works.}
        Our work is of an explanatory nature, advancing our understanding of \acrfull{misl} methods in \acrfull{rl}. To the best of our knowledge, we are the first to prove identifiability of the learned features in \gls{rl} (\cref{assum:rl_ident,thm:ident_theo_supervised}), particularly, for \acrfull{csf}~\citep{zheng_can_2024}. We accomplish this by showing that the modeling assumptions and design choices in \gls{csf} match those in the nonlinear \gls{ica} literature, particularly in DIET~\citep{ibrahim_occams_2024,reizinger_cross-entropy_2024}. We connect the notion of policy diversity to similar conditions both in the \gls{ica} and \acrfull{crl} literatures.
        Furthermore, our insights help explain some contributing factors of why \gls{misl} methods improved in the last few years. Namely, they introduced a linear parametrization of the learned discriminative model to distinguish the skills. The explicit connection to \acrfull{cl} by ~\citet{eysenbach_contrastive_2022} also helped this progress. Finally, design choices regarding the \acrfull{mi} objective and omitting maximum entropy regularizers also contributed to improved performance (\cref{subsec:insights}).
    \paragraph{Conclusion.}
        Our work theoretically proves that learning diverse skills is a meaningful surrogate objective in \acrfull{rl} for learning the ground-truth features of the environment up to a linear transformation. We show this by connecting the \acrfull{misl} family to nonlinear \acrfull{ica} methods, and proving linear identifiability for the features learned by \acrfull{csf}~\citep{zheng_can_2024}. Our identifiability guarantees not only provide a possible explanation of why \gls{misl} works, but also identify the key components of successful \gls{misl} methods. Furthermore, our theoretical insights help elucidate some failure modes of previous methods, \eg, the use of strong entropy regularizers of the policy. We hope that our insights will open up new research possibilities and also help practical algorithm design.

\bibliographystyle{plainnat}
\bibliography{references,references2}

\newpage
\appendix
\onecolumn
\section{Impact Statement}\label{sec:impact}

This paper presents work whose goal is to advance the fields of
Reinforcement Learning and Identifiable Representation Learning. Our focus on representation identifiability promotes transparency and interpretability, which are important safeguards against unintended use.

\section{Proofs}
    \subsection{Affine Generator Systems}\label{subsec:app_assum}

    \begin{definition}[Affine Generator System~\citep{reizinger_cross-entropy_2024} (Defn.~1) ]\label{def:affine_generator}
    A system of vectors $\braces{z_i \in \rr{d}}$ is called an \emph{affine generator system} if any vector in $\rr{d}$ is an affine linear combination of the vectors in the system. Put into symbols: for any $z_i\in \rr{d}$ there exist coefficients $\alpha_i \in \rr{}$, such that
    \begin{equation}
        z = \sum_{i} \alpha_i z_i \quad\text{and}\quad \sum_{i} \alpha_i = 1.
    \end{equation}
    \end{definition}

    \begin{lem}[Properties of affine generator systems~\citep{reizinger_cross-entropy_2024} (Lem.~1)]
        \label{lem:affine_generator}
        The following hold for any affine generator system $\braces{z_i \in \rr{d} }$:
        \begin{enumerate}[leftmargin=*]
            \item for any $i\neq j$ the system $\braces{z_i -z_j }$ is now a generator system of $\rr{d}$;
            \item the invertible linear image of an affine generator system is also an affine generator system.
        \end{enumerate}
    \end{lem}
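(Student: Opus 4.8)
The plan is to verify each property directly from \cref{def:affine_generator}, in both cases exploiting the defining affine normalization $\sum_i \alpha_i = 1$, which is the only nontrivial ingredient.

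For the first claim, I would fix an arbitrary reference index $j$ and show that every $v \in \rr{d}$ lies in the \emph{linear} span of $\braces{z_i - z_j}_{i\neq j}$ (note that ``generator system'' here drops the affine constraint and asks only for a spanning set). Rather than represent $v$ itself, the trick is to apply the affine generator property to the shifted point $z_j + v$: there exist coefficients with $\sum_i \alpha_i = 1$ and $z_j + v = \sum_i \alpha_i z_i$. The crucial step is to use the normalization to absorb the reference point, writing $z_j = \parenthesis{\sum_i \alpha_i} z_j$, so that
\begin{equation*}
    v = \sum_i \alpha_i z_i - z_j = \sum_i \alpha_i \parenthesis{z_i - z_j} = \sum_{i \neq j} \alpha_i \parenthesis{z_i - z_j},
\end{equation*}
where the $i=j$ term vanishes. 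Since $v$ was arbitrary, this exhibits every vector as a linear combination of the differences, establishing that $\braces{z_i - z_j}$ generates $\rr{d}$.

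For the second claim, let $T$ be an invertible linear map and take an arbitrary target $w \in \rr{d}$. I would pull $w$ back via $T^{-1}$, represent the preimage affinely using the original system, and push the representation forward through $T$. Concretely, with $\sum_i \alpha_i = 1$ and $T^{-1}w = \sum_i \alpha_i z_i$, linearity of $T$ gives $w = \sum_i \alpha_i \parenthesis{T z_i}$ with the \emph{same} coefficients, so the affine constraint carries over unchanged. Hence $\braces{T z_i}$ is again an affine generator system.

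Both arguments are short, and I do not expect a genuine obstacle so much as a single point requiring care: in the first part, recognizing that the affine normalization $\sum_i \alpha_i = 1$ is precisely what converts an affine representation of $z_j + v$ into a linear representation of $v$ in terms of the differences. In the second part, invertibility of $T$ is used only to guarantee that every $w$ has a preimage (surjectivity), while preservation of the affine constraint is automatic because the coefficients are untouched by the pushforward.
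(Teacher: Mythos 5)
Your proof is correct. Note that the paper itself states this lemma without proof, importing it from \citet{reizinger_cross-entropy_2024} (Lem.~1); your argument --- absorbing the reference point $z_j$ via the normalization $\sum_i \alpha_i = 1$ in part~1, and pulling $w$ back through $T^{-1}$ and pushing the unchanged coefficients forward in part~2 --- is the standard proof of that result, so it matches the intended argument in all essentials.
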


\section{Experimental Details and Further Results}\label{sec:app_exp}

    \subsection{Compute Resources}\label{subsec:compute}

    All experiments were run in a compute cluster using an Intel Xeon Gold CPU ($16$ cores, $2.9$ GHz) and NVIDIA RTX 2080 Ti GPUs and used at most $48$ GB of RAM. No experiment required more than $3$ days of runtime. In total, our experiments took $0.4$ GPU years.

    \subsection{Hyperparameter Search}

    To train the CSF method in the Ant, Half Cheetah, Quadruped State, and Quadruped Pixel environments, we used the hyperparameters included in the \href{https://github.com/Princeton-RL/contrastive-successor-features}{GitHub repository} of~\citet{zheng_can_2024} as a starting point and modified (i) the encoder $\phi$'s backbone architecture to aid identfiability by introducing skip-connections, (ii) the latent space dimensionality to match the ground truth state's dimensionality, (iii) the trade-off factor $\xi$ between the two factors of the contrastive loss~\citep[Eq. (10) \& paragraph below]{zheng_can_2024}, and (iv) the number of negative samples in the contrastive loss. We found the CSF method to be sensitive to the latent space dimensionality, but an appropriate choice of $\xi$ mitigated this sensitivity and encouraged learning. Increasing the number of negative samples in the contrastive loss led to performance gains in some environments (w.r.t.~state space coverage and oracle reward), which is intuitively explained by requiring more samples to cover the latent space well.

    In our experiments that varied the number of skills in the discrete skill set, we kept all other hyperparameters fixed to the continuous sampling case. In the experiments varying the latent space dimensionality, the other hyperparameters were fixed to those of $\text{dim} = 29$.

    For the exact hyperparameter configurations, see the code in the supplementary material.

    \subsection{Reproducibility}

    We provide shell scripts to reproduce our results in the \texttt{scripts} folder of the code in the supplementary material. To obtain error bars, we additionally varied the seed parameter in the scripts to obtain three to five independent runs.

    \subsection{Further Results}

    \cref{fig:skill_diversity_skill_transfer} shows the counterpart of \cref{fig:skill_diversity} where the zero-shot skill transfer performance is evaluated instead of state space coverage. Similarly, \cref{fig:latent_dim_coverage} shows a variant of \cref{fig:latent_dim} with state space coverage reported instead of zero-shot skill transfer performance.

    \begin{figure}
        \centering
        \includegraphics{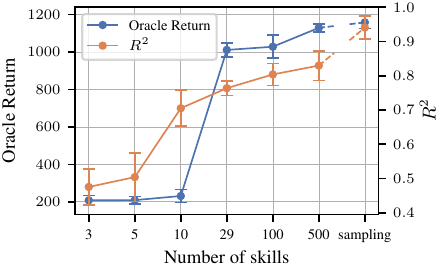}
        \caption{\textbf{The effect of skill diversity on state identifiability and zero-shot skill transfer in the Ant environment.} Skills are sampled from $p(z)$ at the start of pretraining and kept fixed throughout, except in the `sampling' case where skills are redrawn from $p(z)$ during training, emulating an infinite set of skills. An insufficient number of skills violates \cref{assum:rl_ident}(i), leading to both weaker zero-shot skill transfer and a lower \gls{r2} score.}
        \label{fig:skill_diversity_skill_transfer}
    \end{figure}

    \begin{figure}
        \centering
        \includegraphics{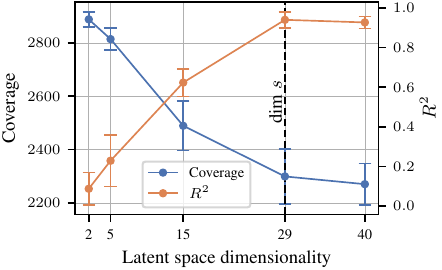}
        \caption{\textbf{The effect of latent space dimensionality on state identifiability and coverage in the Ant environment.} Linear identifiability requires that the feature space has at least as many dimensions as the true state (\cf \cref{assum:rl_ident}(v)).}
        \label{fig:latent_dim_coverage}
    \end{figure}

\newpage
\printglossaries

\end{document}